\newtheorem*{definition}{Definition}
\newtheorem*{proposition}{Proposition}
\newcommand{\N}{\mathbb{N}}
\title{On the Emergence and Test-Time Use of Structural Information in Large Language Models}
\author{
 \textbf{Michelle Chao Chen\textsuperscript{1,2}}\thanks{Correspondence: \texttt{michelle.chao.chen@mailbox.org}},
 \textbf{Moritz Miller\textsuperscript{1,2}},
 \textbf{Bernhard Sch\"olkopf\textsuperscript{1,2}},
 \textbf{Siyuan Guo\textsuperscript{2,3}}
\\
 \textsuperscript{1}ETH Zurich,
 \textsuperscript{2}Max Planck Institute for Intelligent Systems,
  \textsuperscript{3}University of Cambridge
}
\begin{document}
\maketitle
\begin{abstract}
Learning structural information from observational data is central to producing new knowledge outside the training corpus. This holds for mechanistic understanding in scientific discovery as well as flexible test-time compositional generation. We thus study how language models learn abstract structures and utilize the learnt structural information at test-time. To ensure a controlled setup, we design a natural language dataset based on linguistic structural transformations. We empirically show that the emergence of learning structural information correlates with complex reasoning tasks, and that the ability to perform test-time compositional generation remains limited. 
\end{abstract}

\begin{figure*}[t]
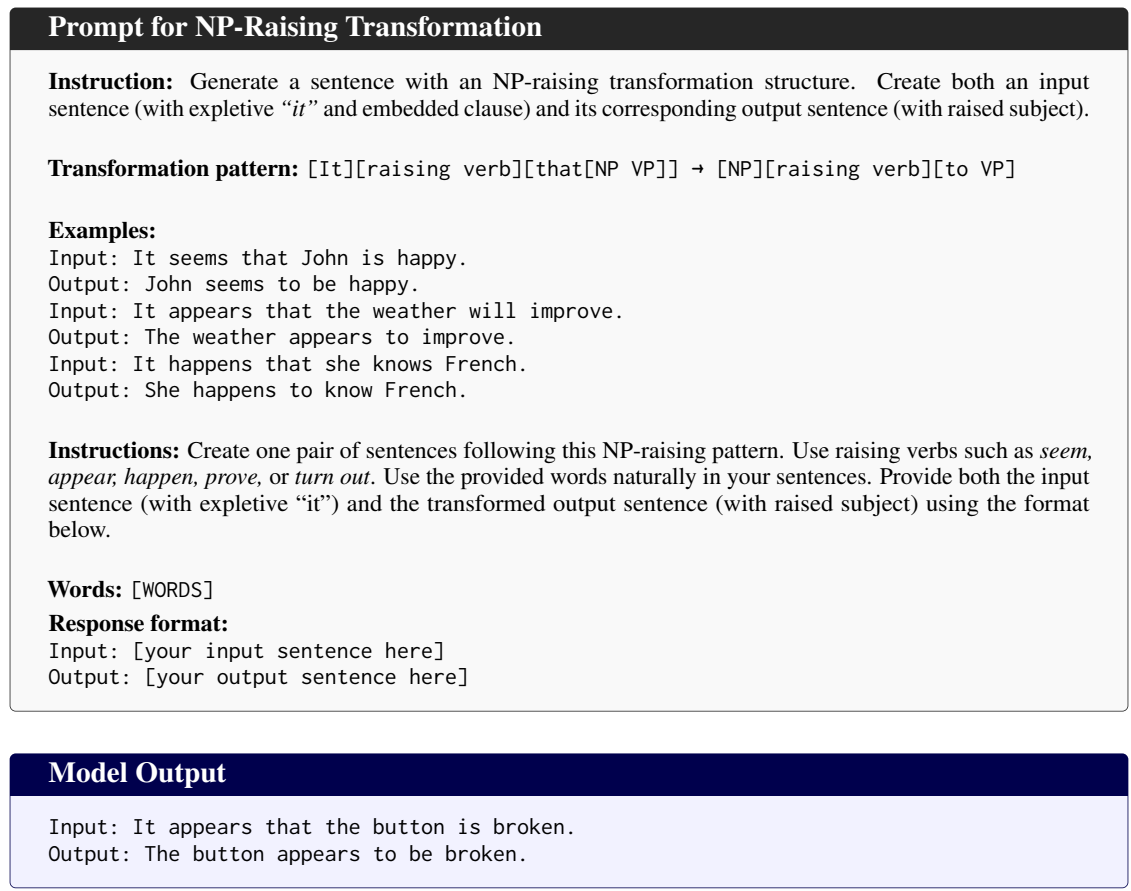

\centering
\begin{tcolorbox}[colback=gray!5!white, colframe=gray!30!black,
                  width=0.92\textwidth, boxrule=0.3pt, arc=2pt,
                  title=\textbf{Prompt for NP-Raising Transformation}]
\small
\textbf{Instruction:} Generate a sentence with an NP-raising transformation structure. 
Create both an input sentence (with expletive \textit{``it''} and embedded clause) and its corresponding output sentence (with raised subject).\\[0.3em]

\textbf{Transformation pattern:} 
\texttt{[It][raising verb][that[NP VP]] → [NP][raising verb][to VP]}\\[0.4em]

\textbf{Examples:}\\
\texttt{Input: It seems that John is happy.}\\
\texttt{Output: John seems to be happy.}\\
\texttt{Input: It appears that the weather will improve.}\\
\texttt{Output: The weather appears to improve.}\\
\texttt{Input: It happens that she knows French.}\\
\texttt{Output: She happens to know French.}\\[0.3em]

\textbf{Instructions:} Create one pair of sentences following this NP-raising pattern. 
Use raising verbs such as \textit{seem, appear, happen, prove,} or \textit{turn out}. 
Use the provided words naturally in your sentences. Provide both the input sentence (with expletive ``it'') and the transformed output sentence (with raised subject) using the format below.\\[0.3em]

\textbf{Words:} \texttt{[WORDS]}\\[0.3em]
\textbf{Response format:}\\
\texttt{Input: [your input sentence here]}\\
\texttt{Output: [your output sentence here]}
\end{tcolorbox}

\vspace{0.2em}

\begin{tcolorbox}[colback=blue!5!white, colframe=blue!30!black,
                  width=0.92\textwidth, boxrule=0.3pt, arc=2pt,
                  title=\textbf{Model Output}]
\small
\texttt{Input: It appears that the button is broken.}\\
\texttt{Output: The button appears to be broken.}
\end{tcolorbox}

\caption{Example of a model prompt for the NP-raising transformation task (top) and a sample output generated by the model (bottom).}
\label{fig:np_raising_prompt}
\end{figure*}

\section{Introduction}
Nature consists of structure and scale, building a hierarchical level of abstract structures to enable responsive and adaptive systems. \citet{guo2025physics} postulates that learning, too, follows the laws of physics under the principle of least action. This work examines how structural information emerges under current learning paradigms and evaluates models’ compositional generalization at test time. The goal is to provide evidence about their ability to generate genuinely new knowledge that is not present in the training corpus.

With the successful application of large models on language, to study structure, we use linguistic structure as a synthetic playground to understand how LLMs learn and compose structure to enable adaptable prediction. One interesting aspect of language comprehension is the ability to compose existing building blocks to comprehend unseen new combinations. This paper studies the problem \textit{how} machines comprehend structural information and \textit{whether} they can utilize learnt knowledge for test-time composition.

To ensure a controlled synthetic playground, we generate a natural language dataset based on \textit{Transformational Grammar (TG)} \citep{chomsky57syntactic,radford1988transformational}. This allows us to analyze \textit{whether} and \textit{how} the model learns the emergence of structure during training, analyze \textit{whether} they can compose learnt structures at test-time, and provide evidence on where in the model this behavior occurs. By doing so, we shine light on how LLMs can generate sentences beyond those directly observed in the corpus. Our contributions are:
\begin{itemize}
    \item We introduce a natural language dataset based on linguistic structural transformations to formally study structural information in language (Section \ref{subsec:dataset}).
    \item We show empirical results that (1) the emergence of structural information during learning correlates with the success of complex reasoning tasks (Section \ref{subsec:emergence}), and (2) the test-time compositional generation is still limited under fine-tuning (Section \ref{subsec:compositional}). 
    \item We perform ablation studies to identify which network components are responsible for learning structural information (Section \ref{subsec:ablation}). 
\end{itemize}

\section{Related Work}

\paragraph{Compositional Generalization.}
Research from cognitive sciences and linguistics argues that humans generalize understanding in language to unseen concepts by interpreting known components, i.e., primitives, and reorganizing basic building blocks to comprehend unseen combinations~\citep{chomsky57syntactic,montague70universal,montague74selected,fodor88connectionism}. Such generalization ability is also advantageous for machines to have in terms of prediction robustness under out-of-distribution generalization and, more importantly, generation of new knowledge not existed in the training corpora. \citet{lake18generalization}, for example, introduces the SCAN dataset on which models are asked to translate natural language commands into primitive-based action protocols. Despite near-perfect in-distribution performance, it is difficult for GRUs~\citep{chung2014gru}, LSTMs~\citep{hochreiter97lstm} and Transformers~\citep{vaswani17transformers} to perform structural generalization to unseen concepts~\citep{kim20cogs}. Putting this into perspective,~\citet{ontanon2022making} observes improving structural generalization capabilities by varying the model design and~\citet{wold2025systematic} quantifies the difficulty of systematic generalization by introducing an entropy-based measure.



\paragraph{Mechanistic Interpretability.} Controlled linguistic generation in language has also been used to probe and understand the inner workings of Transformers. \citet{allen23physics}, for example, studies how Transformers learn hierarchical language structure through synthetic generation of CFGs. The authors discover an implicit dynamic programming algorithm in the model that enables correct next-token prediction. Others probe the residual stream for grammatical number~\citep{lasri22probing,ferrando24similarity} or to address the binding problem~\citep{feng24how}. Relevant to our methodological setup, \citet{mccoy20syntax} studies the sequence-to-sequence model's~\citep{sutskever14sequence} ability to transform a declarative statement into a question through the Question Formation Task. This setup is akin to Transformational Grammar~\citep{chomsky57syntactic,radford1988transformational} which operates over derived structures instead of hierarchical configurations as CFGs do. 

\paragraph{Causality.} A probabilistic formal treatment of structure and its advantages is manifested in causal studies \citep{PetJanSch17, Pearl_2009}. Learning structural knowledge from either observational or interventional data enables downstream tasks such as causal effect estimation \citep{Guoetal24, robertson2025pfn}, counterfactual reasoning \citep{miller2025counterfactual}, robust prediction under out-of-distribution shifts \citep{GuoWilSch24, PerKugSch22}, and representation and world model understanding \citep{Reizingeretal25, lei2022variational, reizinger2025skill}. It has long been established that structural knowledge is impossible to discover from observational data alone \citep{Pearl_2009}; recent work \citep{GuoTotSchHus23} show that heterogeneous observational data is capable of recovering unique causal structures. We thus hypothesize that the model learning from a diverse training corpus also learns implicit re-usable linguistic structures that would be useful for (1) mechanistic understanding; (2) robust composition for new knowledge generation.

\section{Problem Motivation}

Transformational grammar posits that sentences have two levels of representation: a deep structure, which captures the underlying semantic and syntactic relations between elements of a sentence, and a surface structure, which represents the realized or spoken form \cite{radford1988transformational}. Transformational rules operate on the deep structure to derive the surface structure, thereby explaining how speakers can generate and comprehend an unlimited number of grammatical sentences.

We adopt this perspective to examine the mechanisms that allow language models to exhibit similar generative capabilities. Our dataset contains input sentences paralleling deep structures and output sentences produced through custom transformation rules designed to achieve analogous effects.

\subsection{Dataset}
\label{subsec:dataset}
To allow a rigorous study of how models learn, represent, and apply linguistic transformations, we design a dataset based on Transformational Grammar. 

\begin{definition}
    Let $s \in \mathcal{S}$ denote a natural language sentence for $\mathcal{S}$ the space of all sentences. Let $T \in \N$ define the number of possible transformations on the input sentence $s$, and by $R \geq T$ the number of mutually disjoint sets on which at least one of the $T$ transformations can be performed. We partition $\mathcal{S}$ into $R + 1$ disjoint sets and define the sequence of disjoint sets $(\Omega_r)_{r \in \{1,...,R + 1\}}$.
    For $r \in \{1,...,R\}$, we finally define transformations $F_r: 
    \Omega_r \rightarrow \mathcal{S}$.
\end{definition}

\begin{proposition}    
    In our setup, any transformation on sentence $s \in \mathcal{S}$ satisfies either of the two cases:
    \begin{enumerate}
        \item for $r \in \{1,...,R\}$, $F_r: \Omega_r \rightarrow \Omega_{r'}$ such that $F_r(s) \in \Omega_{r'}$ for $r' \in \{1,...,R\}$,
        \item for $r \in \{1,...,R\}$, $F_r: \Omega_r \rightarrow \Omega_{R + 1}$ such that $F_r(s) \in \Omega_{R + 1}$.
    \end{enumerate}
    \label{prop:twocases}
\end{proposition}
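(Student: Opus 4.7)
The plan is to derive the dichotomy directly from the fact that $(\Omega_r)_{r \in \{1,\ldots,R+1\}}$ is a partition of $\mathcal{S}$ by the definition above. Since each transformation $F_r$ is declared with codomain $\mathcal{S}$, for any admissible $s \in \Omega_r$ the image $F_r(s)$ is a sentence in $\mathcal{S}$, and therefore must belong to exactly one of the $R+1$ partition blocks.

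More concretely, first I would fix $r \in \{1,\ldots,R\}$ and $s \in \Omega_r$, and set $t := F_r(s) \in \mathcal{S}$. By the definition, $\mathcal{S} = \bigcup_{i=1}^{R+1} \Omega_i$ with the $\Omega_i$ pairwise disjoint, so there exists a unique index $i \in \{1,\ldots,R+1\}$ with $t \in \Omega_i$. I would then split on whether $i \leq R$ or $i = R+1$: in the former case the image lands in some $\Omega_{r'}$ with $r' \in \{1,\ldots,R\}$ (Case~1), and in the latter case it lands in $\Omega_{R+1}$ (Case~2). Running this argument for every $s \in \Omega_r$ and assembling the resulting indices gives the claim for $F_r$ as a map, and then quantifying over $r$ finishes the proof.

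There is no substantial obstacle here; the statement is essentially a rewriting of the partition hypothesis in the definition, together with the stipulation that $\Omega_{R+1}$ is the single block on which no transformation is defined (so that $\{1,\ldots,R\}$ are precisely the indices of blocks appearing as possible domains). The only point worth stating carefully is the distinction in role between the first $R$ blocks and $\Omega_{R+1}$: the proposition does not exclude that some individual transformation $F_r$ may in fact always land in $\Omega_{R+1}$, nor that another may always land inside $\bigcup_{r'\leq R}\Omega_{r'}$; it only asserts that each \emph{individual image} $F_r(s)$ falls into one of these two regimes. I would make that explicit to avoid any ambiguity in how the cases are quantified.
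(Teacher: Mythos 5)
Your argument is correct and is essentially the same as the paper's: both reduce the claim to the fact that $(\Omega_k)_{k\in\{1,\dots,R+1\}}$ partitions $\mathcal{S}$, so the codomain $\mathcal{S}$ decomposes as $\bigl(\bigcup_{r'\le R}\Omega_{r'}\bigr)\cup\Omega_{R+1}$ and every image $F_r(s)$ falls into exactly one block. Your version merely spells out the pointwise case split that the paper leaves implicit.
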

In our dataset, each $F_r$ corresponds to a specific transformation, such as passivization, NP-raising, or question formation. For example, a sentence $s \in \Omega_r$ like ``The scientist discovered the formula'' may be transformed by $F_{\text{passive}}$ into ``The formula was discovered by the scientist,'' which lies in another subset $\Omega_{r'}$. Since multiple transformations can be applied sequentially (e.g., applying NP-raising after passivization), the output of one transformation can serve as the input to another. However, certain transformations eventually reach an \emph{absorbing state} $\Omega_{R+1}$—a set of sentences on which no further valid transformations can be applied under our defined grammar (e.g., once a sentence is already in passive voice, applying passivization again has no effect).

\medskip
\noindent
\textbf{Dataset Features.} 
A summary of the dataset is provided in Table~\ref{fig:dataset_summary}.
\begin{itemize}
    \item \textbf{Single-level transformations.} A transformation operator $\mathcal{A}$ is applied to a base sentence $s$, producing $\mathcal{A}(s)$.
    \item \textbf{Nested transformations.} Multiple operators are applied sequentially, e.g., $\mathcal{B}(\mathcal{A}(s))$, where the output of one transformation becomes the input to the next. For instance, starting from the base sentence ``The artist applied the makeup for the photoshoot,'' applying passivization yields ``The makeup was applied for the photoshoot by the artist,'' and subsequently applying question formation produces ``Was the makeup applied for the photoshoot by the artist?''.
    \item \textbf{Compositional ambiguity.} Not all transformations can be meaningfully composed: applying I-Movement (question formation) after Extraposition can yield ungrammatical outputs (e.g., transforming ``The book disappeared on the table” into ``Did the book disappeared on the table?” violates tense agreement). Thus, models must implicitly identify compatible patterns before applying multiple transformations.
\end{itemize}
In total, we define ten distinct transformation types grouped into five broader syntactic categories, such as movement, raising, and passivization. Our dataset includes several types of transformation structures (see Table \ref{tab:transformations}). 

\medskip \noindent
\textbf{Dataset Construction.} For sentence generation, we use DeepSeek-V3 \citep{deepseekai2025deepseekv3technicalreport} with instructions. To improve consistency in generation, we prompt the model to produce input–output pairs together rather than generating base sentences first and then applying transformations separately. Prompts include high-quality examples, explicit descriptions of the transformation to be applied, and a set of required words sampled from the TinyStories \citep{eldan2023tinystoriessmalllanguagemodels} vocabulary to increase lexical variety. An example of such prompts is shown in Figure~\ref{fig:np_raising_prompt}. We generate roughly $2000$ samples for each single-level transformation and $500$ samples for each nested transformation and perform filtering for duplicates and low-quality examples containing repetitions or irrelevant tokens.

\begin{table*}[t]
\centering
\caption{Overview of the major syntactic transformations represented in our dataset. Each transformation type is paired with a short linguistic description and an illustrative example showing a base sentence $s$ and its transformed version $\mathcal{T}(s)$.}
\resizebox{\textwidth}{!}{
\begin{tabular}{l p{6.5cm} p{7cm}}
\toprule
\textbf{Transformation} & \textbf{Description} & \textbf{Example} \\
\midrule
Extraposition & Moves prepositional phrases from within noun phrases to sentence-final position &
\textbf{Base} $s$: The book on the table disappeared.\\[-1.0em]
& & \textbf{Output} $\mathcal{T}(s)$: The book disappeared on the table. \\[1.0em]
I-Movement & Moves auxiliaries or modals to sentence-initial position to form questions &
\textbf{Base} $s$: She can swim.\\[-1.0em]
& & \textbf{Output} $\mathcal{T}(s)$: Can she swim? \\[1.0em]
NP Passive & Converts active sentences to passive with clear subject–object alternation &
\textbf{Base} $s$: The scientist discovered the formula.\\[-1.0em]
& & \textbf{Output} $\mathcal{T}(s)$: The formula was discovered by the scientist. \\[1.0em]
NP Raising & Converts expletive or embedded clauses to raised-subject structures &
\textbf{Base} $s$: It seems that John is honest.\\[-1.0em]
& & \textbf{Output} $\mathcal{T}(s)$: John seems to be honest. \\[1.0em]
V-Movement & Integrates separated infinitive or auxiliary components into a single clause &
\textbf{Base} $s$: The children; to play outside.\\[-1.0em]
& & \textbf{Output} $\mathcal{T}(s)$: The children play outside. \\
\bottomrule
\end{tabular}
}
\label{tab:transformations}
\end{table*}

\section{Experiments}
We use our dataset to investigate three questions:
\begin{itemize}
    \item The emergence of structural understanding during learning (see Section \ref{subsec:emergence})
    \item The ability to compose structures in generation (see Section \ref{subsec:compositional})
    \item Ablation studies on model component contribution (see Section \ref{subsec:ablation})
\end{itemize}

\subsection{Emergence of Structures}
\label{subsec:emergence}
The first group of experiments investigates \textit{whether} and \textit{when} models learn grammatical structures in their sentence representations.  

\textbf{Methodology.} A sentence with $n$ tokens is represented as $s = (t_1, t_2, \dots, t_n)$. Residual stream activations at layer $\ell$ denotes as  $h^{(\ell)}_{i} \in \mathbb{R}^d$.  From the last layer $L$, we obtain a sentence-level representation by mean pooling across the token-level
$
\bar{h}(s) = \frac{1}{n} \sum_{i=1}^{n} h^{(L)}_{i}.
$ For a transformation $\mathcal{A}(\cdot)$ applied to base sentence $s$, we compute a difference vector
$
\Delta_A(s) = \bar{h}(A(s)) - \bar{h}(s),
$
and an $\ell_2$ distance between the base and transformed sentence to capture the structural information
$
d(s, A(s)) = \lVert \bar{h}(A(s)) - \bar{h}(s) \rVert_2.
$

\textbf{Experiments.}
\textit{(1) Visualization.} We cluster the $\Delta_A(s)$ vectors across transformations using K-means clustering with $k=10$ transformation types and apply dimensionality reduction via PCA and t-SNE \cite{tsne} for visualization. We compute cosine similarities between different vectors and analyze separability scores to quantify how distinctly different transformation types cluster in representation space. \textit{(2) Structure emergence.} Using $d$ as a metric for structural understanding in latent representations, we analyze its evolution across Pythia-410M \cite{pythia} checkpoints sampled at approximately exponentially spaced training steps throughout training. To assess the relationship between syntactic representation development and language modeling capability, we evaluate each checkpoint on two complementary metrics: (1) perplexity on pure language modelling tasks, such as WikiText-103 \cite{wikitext} and Paloma benchmarks \cite{paloma}, and (2) accuracy on standardized benchmarks from the LM Evaluation Harness \cite{eval-harness} which focus more on natural language reasoning and knowledge.

\begin{figure}[t]
    \centering
    \includegraphics[width=\columnwidth]{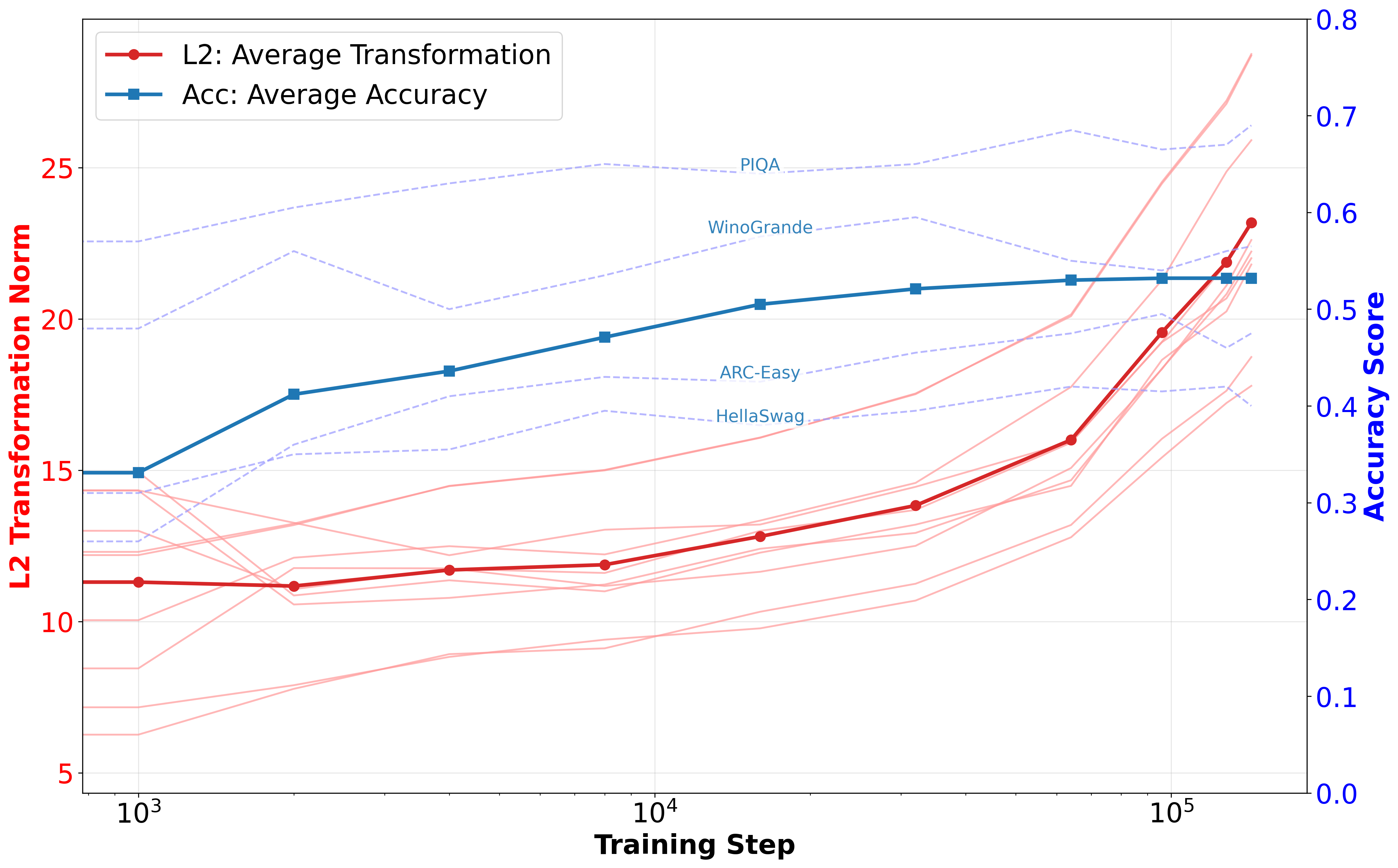}
    \vspace{0.3em} 
    \includegraphics[width=\columnwidth]{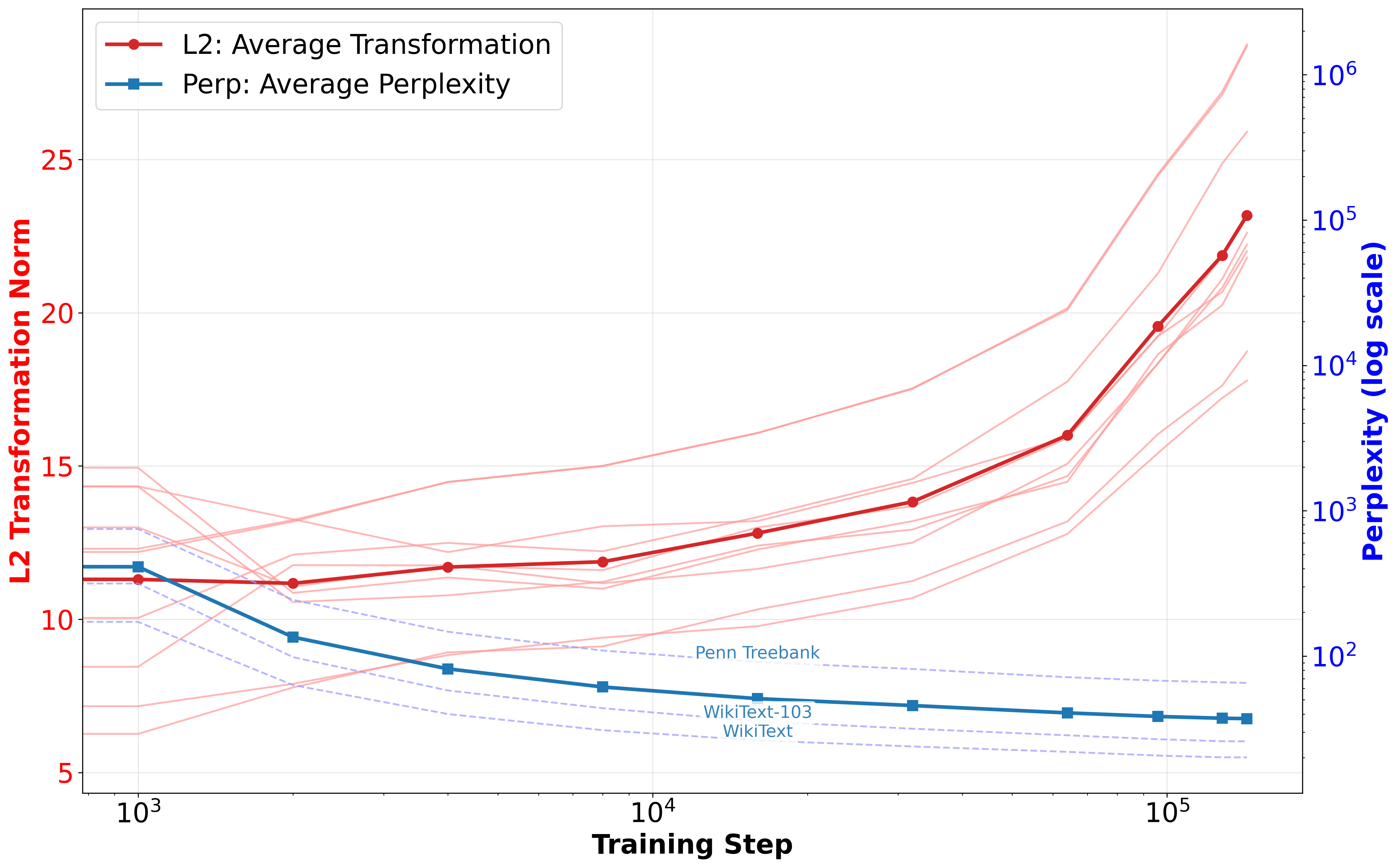}
    \caption{L2 transformation norms and Pythia-410M performance training. Red lines show syntactic transformation embedding differences. Blue lines show model performance: (Top) downstream task accuracy on reasoning tasks. (Bottom) language modeling perplexity on WikiText and Paloma datasets. Individual metrics shown as dashed lines, averages as solid lines.}
    \label{fig:pythia_l2}
\end{figure}

\textbf{Results.} \textit{(1) Development of Internal Representations.}  
We observe that different sentence transformations form distinct and separable clusters in the embedding space (see Figure~\ref{fig:pca_clustering_progression}). This pattern appears consistently across both PCA projections and t-SNE visualizations as seen in Figure~\ref{fig:tsne_clustering_progression}, indicating that the model captures shared structural regularities. Furthermore, embeddings of transformations in the same category (e.g., NP Passive) form clusters close to each other, suggesting that the model consistently encodes these structural variations.

\textit{(2) Emergence of structure.} Most notably in Figure~\ref{fig:pythia_l2}, the average L2 norm between base and transformed sentence representations increases steadily throughout training, reflecting that the model becomes more sensitive to syntactic alterations. A sharp rise occurs around step 64k, which likely marks a phase transition where structural distinctions become more pronounced. 

\textit{(3) Correlation with General Language Modeling Performance.}  
To examine how representational changes relate to general modeling ability, we compare these findings to standard performance metrics. Figure~\ref{fig:pythia_l2} (bottom) shows that perplexity on language modeling tasks decreases sharply during early training steps. On the LM Harness benchmark, the Pythia-410M model shows limited improvement on causal or physical reasoning tasks—expected given its smaller size—but the L2 norm trends align more closely with gains in core language modeling performance, such as next-word prediction. This suggests that the refinement of structural representations coincides with the model’s development of linguistic competence.

\textit{Overall Insight.}  
These results indicate that sentence representations exhibit a sudden phase transition as training progresses. The perplexity loss does not correlate closely with the structural information learned, but the structural information correlates with the more complex reasoning tasks, indicating that reasoning may require structural information.

\subsection{Structural Compositional Generalization}
\label{subsec:compositional}
To evaluate a trained model's ability to use the internal representation of the transformations, we perform both full parameter finetuning and parameter-efficient finetuning using LoRA adapters \citep{hu2022lora} with LLaMA3-8B~\citep{grattafiori2024llama3herdmodels}. 

In particular, given a base sentence $s$ and transformation types $\mathcal{A}$ and $\mathcal{B}$, we train the model to produce the correct transformed output $\mathcal{T}(s)$ from prompts of the form shown in Figure~\ref{fig:prompt_formats}. For nested transformations, the prompt specifies the combined transformation label (e.g., \texttt{Transform ($\mathcal{A}$+$\mathcal{B}$)}). During fine-tuning, both intermediate (i.e. application of one transformation) and final outputs are provided. At inference, only the initial input and the transformation name are given, and the model must generate both the intermediate and final transformed sentences.

\begin{figure}[t]

\small
\textbf{Fine-tuning Prompt Example}\\[0.3em]
During fine-tuning, both the transformation instruction and the target output are provided to the model. The model learns to predict the transformed output.

\begin{tcolorbox}[colback=gray!5!white, colframe=gray!35!black, boxrule=0.3pt, arc=2pt, width=0.95\columnwidth]
\texttt{Transform (A): The teacher graded the exams.}\\
\texttt{Output: The exams were graded by the teacher.}
\end{tcolorbox}

\textbf{Evaluation Prompt}\\[0.3em]
At inference, only the transformation instruction and input sentence are provided. The model must generate the corresponding output.

\begin{tcolorbox}[colback=gray!5!white, colframe=gray!35!black, boxrule=0.3pt, arc=2pt, width=0.95\columnwidth]
\texttt{Transform (A): The teacher graded the exams.}\\
\texttt{Output:}
\end{tcolorbox}
\caption{Prompt formats used during fine-tuning and inference.}
\label{fig:prompt_formats}
\end{figure}

\textbf{Evaluation Protocol.} We assess performance using exact match accuracy and partial match accuracy (word overlap using Jaccard similarity $\ge 0.8$). For nested transformations producing multiple outputs, we check whether the prediction contains all expected results. The evaluation covers both \textit{in-distribution} transformation types and crucially tests \emph{compositional generalization} to unseen nested transformations (e.g., training on $\mathcal{A}(\mathcal{B}(s))$ and $\mathcal{B}(\mathcal{C}(s))$, then testing on $\mathcal{A}(\mathcal{C}(s))$).

\textbf{Disentanglement Hypothesis.} We specifically investigate whether models can perform nested transformations \emph{without observing intermediate results} during training. This is based on the theoretical evidence that structure is discoverable from observational data \citep{GuoTotSchHus23} and the hypothesis that next-token prediction performs implicit disentanglement automatically \citep{zhang2024memory}. Thus we finetune an additional model where the intermediate results of nested transformations are not shown. In particular, a model trained only on a diverse set of input-output pairs $(s, \mathcal{A}(\mathcal{B}(s)))$, $(s, \mathcal{B}(\mathcal{C}(s)))$ can successfully generate $\mathcal{A}(\mathcal{C}(s))$ for novel combinations. This suggests the model learns to disentangle and compose the individual transformation operations $\mathcal{A}$ and $\mathcal{C}$ at test-time, rather than merely memorizing specific transformation sequences. We believe this is one of the desirable features of next-generation AI capability and we ask the question whether the learned representations $\Delta A(s)$ and $\Delta C(s)$ support true compositional generation at test-time.

\begin{table*}[t]
\centering
\small
\caption{
\textbf{Comparison of fine-tuning methods across transformation settings.}
We report exact and partial match accuracy for three model variants: 
(1) regular full-parameter fine-tuning, 
(2) LoRA fine-tuning, and 
(3) LoRA fine-tuning on sequences without intermediate results.
Results highlight the generalization gap across transformation complexity and out-of-distribution (OOD) settings.
}
\setlength{\tabcolsep}{8pt}
\renewcommand{\arraystretch}{1.1}
\begin{tabular}{lcccccc}
\toprule
\multirow{2}{*}{\textbf{Dataset / Model Variant}} &
\multicolumn{2}{c}{\textbf{Full parameter}} &
\multicolumn{2}{c}{\textbf{LoRA }} &
\multicolumn{2}{c}{\textbf{LoRA (No-Intermediate)}} \\
\cmidrule(lr){2-3} \cmidrule(lr){4-5} \cmidrule(lr){6-7}
 & \textbf{Exact} & \textbf{Partial} & \textbf{Exact} & \textbf{Partial} & \textbf{Exact} & \textbf{Partial} \\
\midrule
\textbf{Single Transformations}         & 54.80\% & 60.10\% & 96.40\% & 98.80\% & 96.40\% & 98.60\% \\
\textbf{Double Transformations}         & 37.20\% & 40.80\% & 0.00\%  & 86.95\% & 0.00\%  & 81.89\% \\
\textbf{Double w/ Intermediate}         & -- & -- & 82.74\% & 84.56\% & 74.37\% & 75.79\% \\
\textbf{OOD (A+C, H+E combinations)}    & -- & -- & 0.00\%  & 45.83\% & 0.00\%  & 49.25\% \\
\textbf{OOD w/ Intermediate (A+C, H+E)} & -- & -- & 4.92\% & 9.85\%  & 8.54\% & 32.16\% \\
\bottomrule
\end{tabular}
\label{tab:finetuning_comparison}
\end{table*}

\textbf{Results.} \textit{(1) Full-Parameter Fine-tuning.}
Full-parameter fine-tuning performed poorly, likely due to the limited dataset size leading to unstable optimization and overfitting. The model failed to generalize to transformations seen during training, thus we focused subsequent experiments, including OOD evaluations, on LoRA-based fine-tuning.

\textit{(2) LoRA Fine-tuning.}
Using Low-Rank Adaptation (LoRA), we fine-tuned 0.52\% of the parameters with significant improvement from full parameter fine-tuning as shown in Table~\ref{tab:finetuning_comparison}. On the validation set, the model achieves high partial-match accuracy but rarely produces exact matches for nested transformations when the intermediate step (i.e., the result of the first transformation) is not shown. This suggests that while the model captured aspects of the transformation rules, it does not fully disentangle the transformation operations, which is further supported by the evaluation results on the unseen transformations (OOD). 

\textit{(3) LoRA Fine-tuning with Truncated Dataset.}
To further probe the ability to perform structural composition, we check the effect of providing the first intermediate result (e.g., $\mathcal{B}(s)$) for novel transformation pairs. Performance improved in this setting, indicating that the model benefits from intermediate structural guidance during multi-step transformations.

In the variant of the dataset, the intermediate results for nested transformations are removed, making them appear as single-step transformations distinguishable only by the transformation name at the beginning of the prompt. The LoRA model trained on this dataset achieves comparable performance on both validation and OOD sets when intermediate results are available during training.

\textit{Overall Insight.}
These findings suggest that the model fine-tuned in a small data regime cannot flexibly re-compose out-of-distribution to produce new knowledge. Nested transformations can, to some extent, be treated as extended single-step transformations. The model’s performance shown in row 3 and 5 on Table~\ref{tab:finetuning_comparison} supports the idea that providing explicit intermediate steps during training and inference aids generalization across compositional structures to a small extent but not decisively.

\subsection{Ablation studies on attention heads and MLPs}
\label{subsec:ablation}
The final set of experiments employs causal intervention analysis to identify which network components are most responsible for syntactic transformations. We perform systematic ablation studies by zeroing the outputs of individual attention heads and MLP blocks, then measuring the resulting changes in next-token probability distributions \cite{nanda2022transformerlens}.

Formally, let $\mathbf{s} = (t_1, \ldots, t_n)$ denote an input sequence, and let $t^*$ be the target token representing the correct syntactic transformation. For any component $c$ (either attention head $(l,h)$ or MLP block $l$), we define:
\begin{align}
p_{\text{clean}}(t^* \mid \mathbf{s}) &= \text{softmax}(\mathbf{W}_U \mathbf{h}_n^{(L)})_{t^*} \\
p_{\text{ablated}}^{(c)}(t^* \mid \mathbf{s}) &= \text{softmax}(\mathbf{W}_U \tilde{\mathbf{h}}_n^{(L)})_{t^*}
\end{align}
where $\mathbf{h}_n^{(L)}$ is the final hidden state, $\tilde{\mathbf{h}}_n^{(L)}$ is the hidden state under intervention on component $c$, and $\mathbf{W}_U$ is the unembedding matrix. The causal contribution of component $c$ is then:
\[
\Delta p^{(c)} = p_{\text{clean}}(t^* \mid \mathbf{s}) - p_{\text{ablated}}^{(c)}(t^* \mid \mathbf{s})
\]

To trace the evolution of transformation-relevant information through the network, we apply layer-wise decoding by computing intermediate probability distributions:
\[
p^{(l)}(t^* \mid \mathbf{x}) = \text{softmax}(\mathbf{W}_U \text{LayerNorm}(\mathbf{h}_n^{(l)}))_{t^*}
\]
for each layer $l \in \{0, 1, \ldots, L\}$, where $\mathbf{h}_n^{(0)}$ represents the embedding layer output and $\mathbf{h}_n^{(l)}$ is the residual stream after layer $l$.

By analyzing the distribution of $\Delta p^{(c)}$ values across all components and the trajectory of $p^{(l)}(t^* \mid \mathbf{x})$ across layers, we intend to identify the specific attention heads and MLP blocks most critical for each syntactic transformation, as well as the layers where transformation-relevant computations primarily occur.

In addition, we construct linear probes to identify directions in the residual stream that differentiate between transformation types. For each transformation type $T_n$ and layer $i$, we use Linear Discriminant Analysis (LDA) to find an optimal separating direction. Specifically, we treat this as a binary classification problem: transformation $T_n$ versus all other transformations $T_{m \neq n}$.

Let $\mathbf{r}_{T_n,i}^{(j)}$ denote the last-token residual activation at layer $i$ for the $j$-th example of transformation type $T_n$, and let $\mathbf{r}_{\neg T_n,i}^{(k)}$ denote the last-token residual activation at layer $i$ for the $k$-th example of any other transformation type. The LDA probe direction $\mathbf{v}_{T_n,i}$ is computed as:

\begin{equation}
\mathbf{v}_{T_n,i} = \mathbf{S}_W^{-1}(\boldsymbol{\mu}_{T_n,i} - \boldsymbol{\mu}_{\neg T_n,i})
\end{equation}

where $\boldsymbol{\mu}_{T_n,i} = \frac{1}{N_{T_n}}\sum_{j=1}^{N_{T_n}} \mathbf{r}_{T_n,i}^{(j)}$ and $\boldsymbol{\mu}_{\neg T_n,i} = \frac{1}{N_{\neg T_n}}\sum_{k=1}^{N_{\neg T_n}} \mathbf{r}_{\neg T_n,i}^{(k)}$ are the mean residual activations for transformation $T_n$ and all other transformations, respectively, and $\mathbf{S}_W$ is the pooled within-class covariance matrix. We train these probes on the training set and evaluate their discriminative power on the test set.

\begin{figure}[t]
    \centering
    \includegraphics[width=\columnwidth]{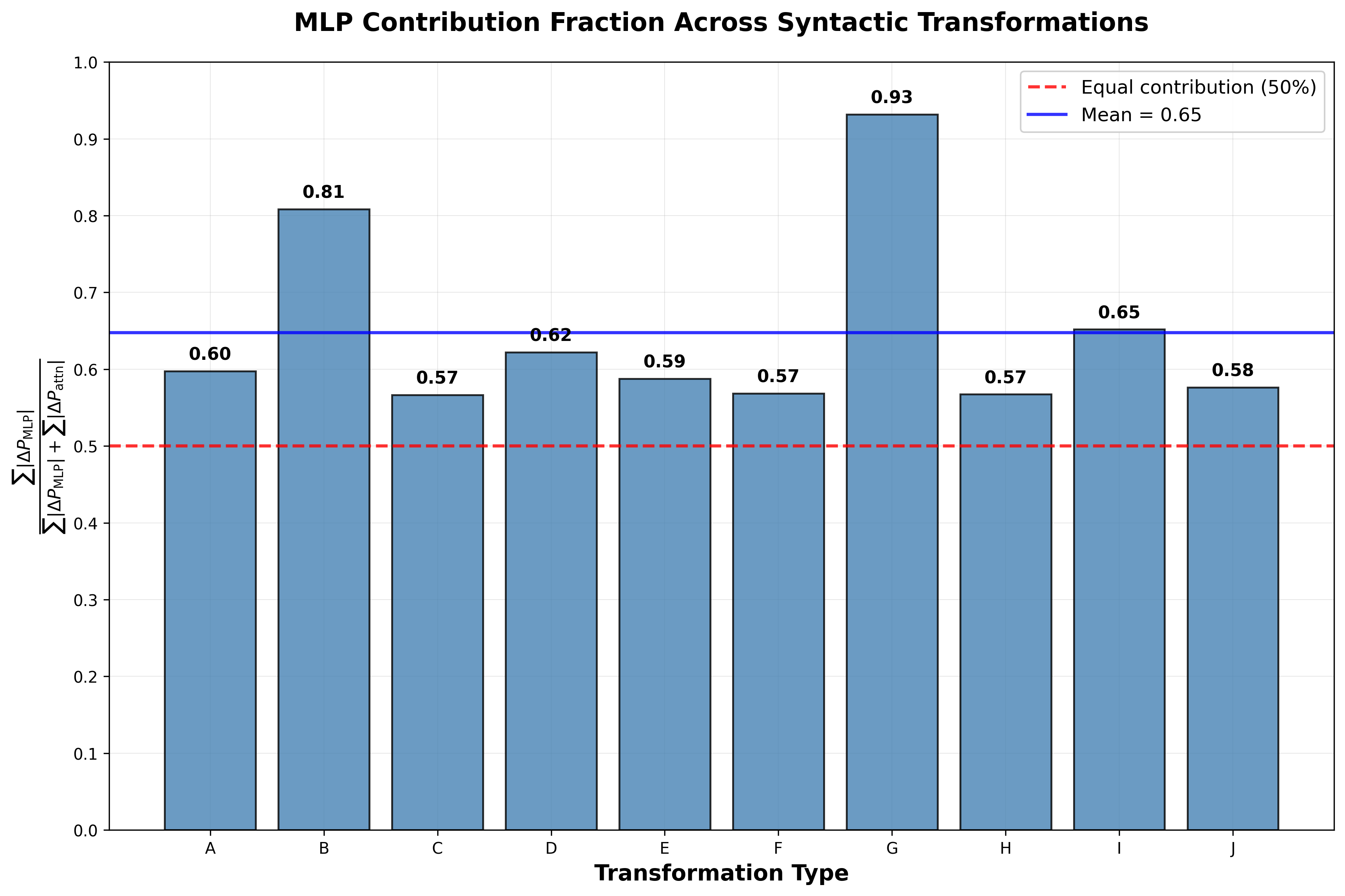}
    \vspace{0.3em} 
    \includegraphics[width=\columnwidth]{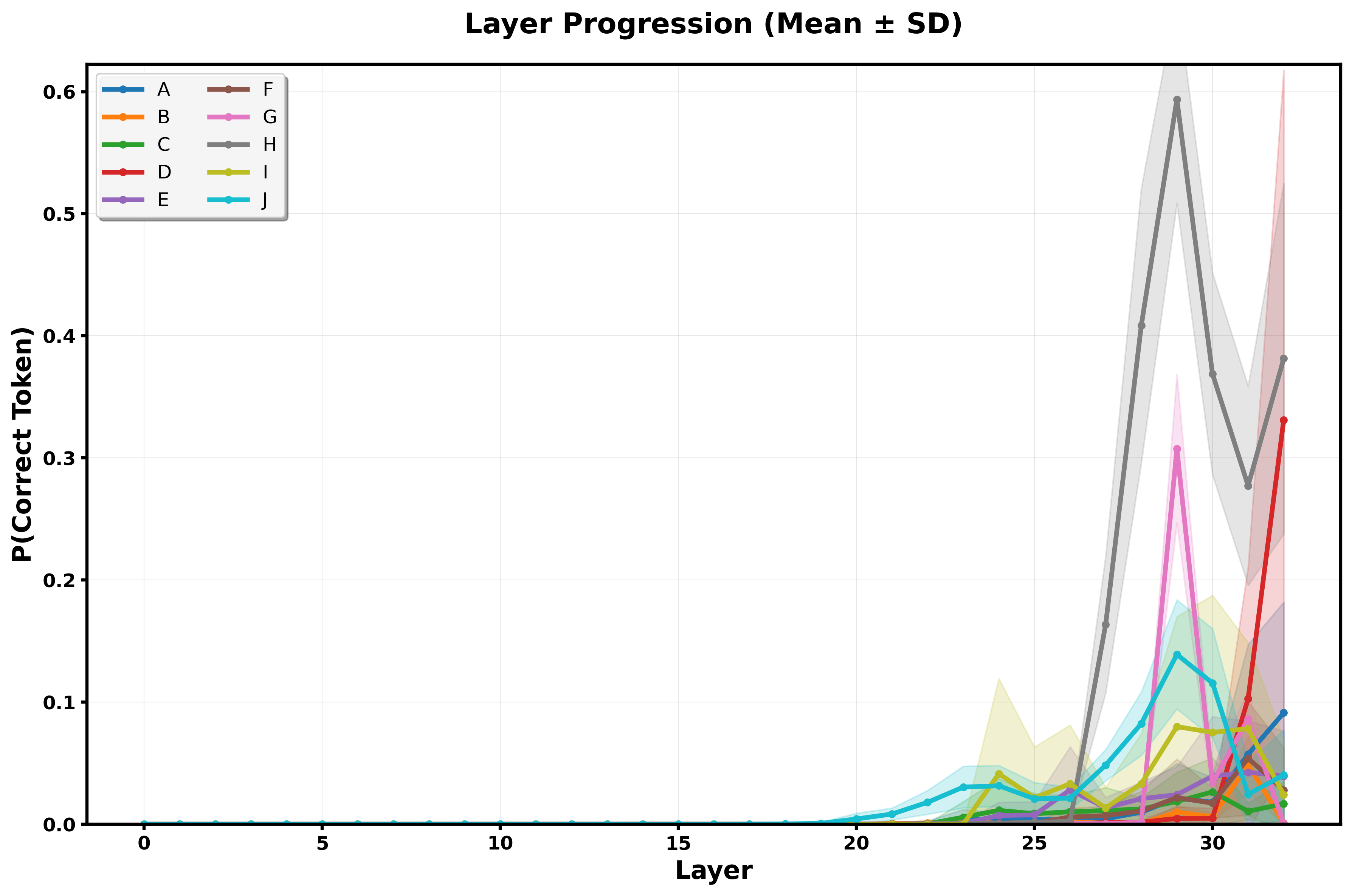}
    \caption{(Top) Relative contribution of MLP compared to multi-head attention (Bottom) Progression of probability of predicting the correct token for each transformation over all layers}
    \label{fig:mlp_vs_attention_layers}
\end{figure}

\textbf{Results.} We restrict our analysis to single-level transformations to avoid confounding effects from multiple simultaneous syntactic operations, allowing us to clearly measure the contributions of attention heads and MLP layers to individual transformation types. 

\textit{(1) Late-Layer Concentration.} Our analysis reveals that for any given transformation type, multiple attention heads contribute meaningfully to predicting the correct target token, with no single head acting as a bottleneck for the transformation process. However, the probability of predicting the token indicative of the transformation increases sharply in the final third of the network (layers 24-32), where 38\% of the contribution is concentrated. As shown in Figure~\ref{fig:mlp_vs_attention_layers} (bottom), the probability of predicting the correct token is near zero prior to layer 20. \looseness=-1 

\textit{(2) Component-Level Preferences.} Despite the distributed nature of processing, clear systematic biases emerge at the component level. MLP blocks consistently contribute more to syntactic transformation success than attention mechanisms, accounting for 65\% of total causal contribution versus 35\% for attention. In Figure~\ref{fig:mlp_vs_attention_layers}, the proportion of total contribution coming from the MLP is computed by summing over every layer. 

\textit{(3) Representation Quality.}
Complementing our causal analysis, linear discriminant analysis of intermediate representations shows near-perfect separation between transformation types, indicating that the model forms distinct internal representations for different syntactic operations, consistent with Section \ref{subsec:emergence}. This can additionally be understood as evidence for the linear representation hypothesis~\citep{park2024lrh}. In Figure~\ref{fig:projection_heatmap}, we show the projection values of the separating vector between transformation $\mathcal{A}$ and all other transformations. The average projection monotonically increases throughout the layers.

\begin{figure}[t]
    \centering
    \includegraphics[width=\columnwidth]{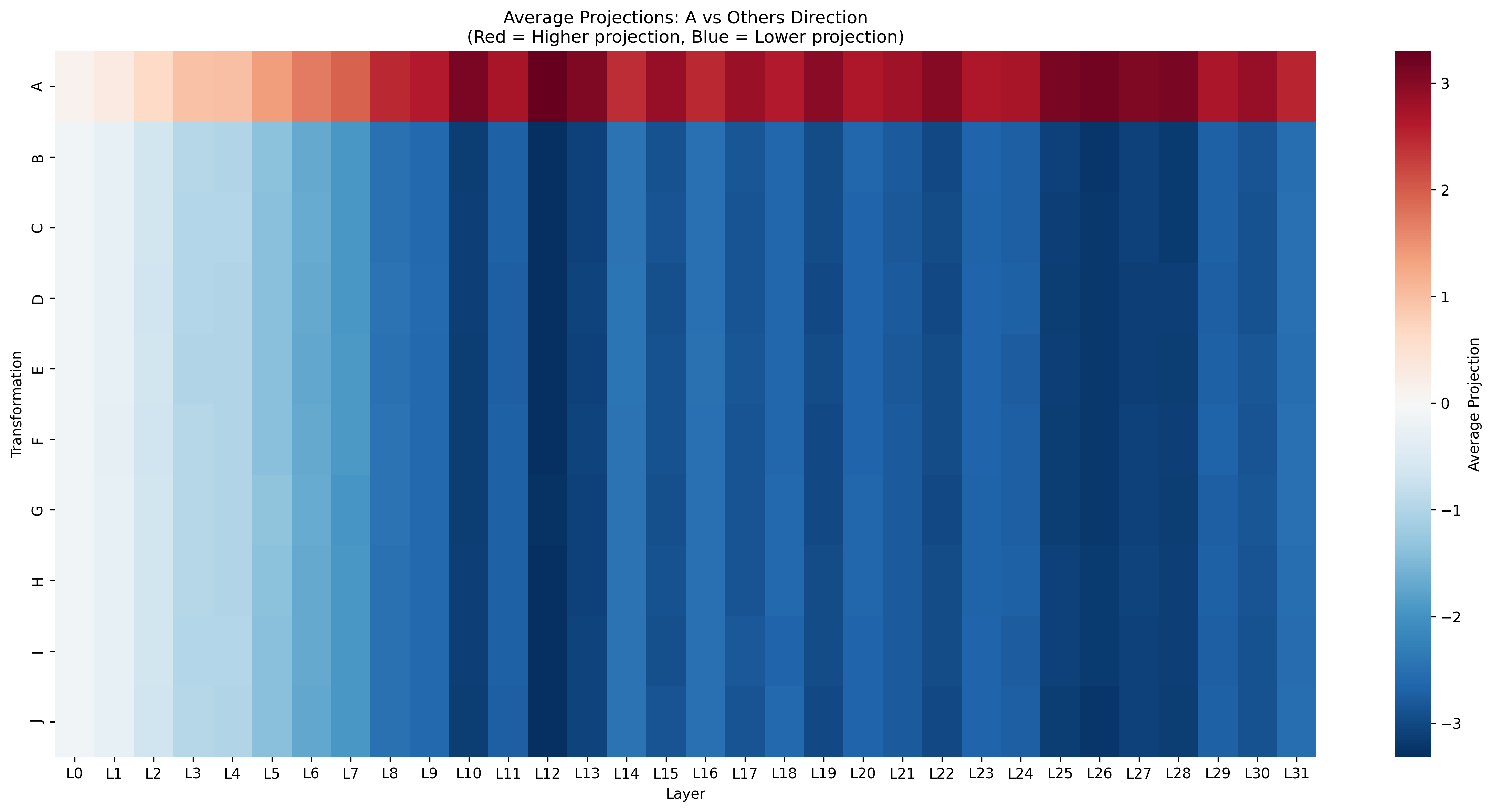}
    \caption{Heatmap showing average projections of intermediate representations onto the transformation A vs. others direction across all 32 layers. Each row represents a different transformation type (A-J), with transformation A contrasted against all other transformations. Red regions indicate higher projection values (representations more similar to transformation A), while blue regions indicate lower projection values (representations more dissimilar to transformation A).}
    \label{fig:projection_heatmap}
\end{figure}
\section{Conclusion}

We discussed the emergence of learning syntactic transformations during fine-tuning. We observe that LLMs achieve near-perfect accuracy on single transformations, and $74.37\%$ to $82.74\%$ exact accuracy on double transformations when providing intermediate representations and LoRA fine-tuning on Llama3-8B. Our findings resonate with recent literature as both full parameter updates as well as OOD evaluation on double transformations substantially decrease performance. To do syntactic transformations, models rely on the MLP sublayer and distinguish between different transformations at the final layers. We also provide evidence that the separation between concepts emerges as finetuning progresses and that this separation is linearly encoded in the model's residual stream. We finally introduce a natural language dataset on a broad variety of linguistic structural transformations and thereby show that LLMs can learn structural information during fine-tuning.

\section*{Limitations}
Our analysis focuses only on English transformations, limiting generalizability to the cross-linguistic patterns predicted by transformational grammar theory. The single-level transformation approach, while enabling clean causal attribution, may conflate multiple sub-operations within each syntactic transformation that could be attributed to distinct network components if decomposed further. Additionally, our findings on an $8$B model may not generalize to larger scales, though scaling would require more complex datasets to avoid memorization. Future work could address these through cross-linguistic studies, finer-grained operation decomposition, and evaluation on larger models with appropriately challenging transformation tasks. \looseness=-1

\section*{Acknowledgments}
M.C.C and M.M acknowledges Florent Draye for his insights and guidance on mechanistic interpretability and Hsiao-Ru Pan for feedback on the first draft. S.G. acknowledges the constructive discussions with Andreas Opedal and Ryan Cotterall on the manuscript, particularly regarding insights into computational linguistics. 
\bibliography{references}

\clearpage
\newpage
\onecolumn
\appendix

\section{Appendix}
\label{sec:appendix}

\subsection{Formalizing the dataset}
\begin{proof}[Proof of Proposition~\ref{prop:twocases}]
    Note that $\bigcup_{k \in \{1,...,R + 1\}} \Omega_k \subseteq \mathcal{S}$ by definition of $(\Omega_r)_{r \in \{1,...,R + 1\}}$. Similarly, $\mathcal{S} \subseteq \bigcup_{k \in \{1,...,R + 1\}} \Omega_k$ as we \textit{partition} the space into $R + 1$ subsets. Thus, for $r \in \{1,...,R\}, $
    \begin{align*}
        F_r: \Omega_r \rightarrow \left( \bigcup_{r' \in \{1,...,R \}} \Omega_{r'} \right) \cup \Omega_{R+1} = \mathcal{S}.
    \end{align*}
\end{proof}

\subsection{Progression of Sentence Representations}
\paragraph{Transformation Representation Clustering.}
For each transformation type (e.g., extraposition, passivization, raising), we measure the difference between mean token embeddings of transformed and original sentences from the final layer of Pythia-410M. 
The t-SNE visualizations (Figure~\ref{fig:tsne_clustering_progression}) show how initially overlapping transformation representations become well-separated by mid-training (16K–64K steps), revealing the emergence of structured clusters for distinct syntactic operations. 
PCA plots (Figure~\ref{fig:pca_clustering_progression}) further capture this progression, highlighting how the model organizes syntactic variation along major linear components. 
Clusters of similar color correspond to transformations within the same broad category (e.g., passives, raising, movement), suggesting hierarchical organization of grammatical knowledge.

\paragraph{L2 Norm Progression.}
The L2 norms of transformation embedding differences track how syntactic representations evolve during training. 
Related transformations—such as passive (\texttt{np\_passive\_1--3}), raising (\texttt{np\_raising\_1--3}), and movement (\texttt{i\_movement\_1}, \texttt{v\_movement\_1--2})—exhibit aligned trajectories, indicating coherent internal grouping. 
Most norms remain stable early on but rise sharply in later stages, paralleling performance gains. 
Across datasets (WikiText, PALOMA WikiText-103, PALOMA Penn Treebank), lower perplexity correlates with higher transformation norms, suggesting that improved language modeling coincides with more structured syntactic representations.

\subsection{Comprehensive Data Summary}
Our syntactic transformation dataset comprises 22,981 sentence pairs spanning 381,256 words (474,907 tokens). The dataset includes ten single-level transformations covering major syntactic operations: extraposition, movement (I-movement, V-movement), passivization (three variants), and raising constructions (three variants), totaling 19,100 examples. Additionally, we include eight nested transformation sequences that combine two operations (e.g., NP Raising + Extraposition, Passive + I Movement), contributing 3,881 examples. The nested transformations test compositional syntactic understanding by requiring models to process multiple sequential grammatical operations. Transformation counts range from 1,421 to 2,000 for single transformations and 466 to 499 for nested transformations.

\subsection{Detailed Transformation Descriptions}
\paragraph{Single Transformations}
Single transformations represent fundamental operations that systematically modify sentence structure. These include movement operations (Extraposition, I-Movement, V-Movement), voice alternations (NP Passive variants), and structural rearrangements (NP Raising variants). Each transformation follows specific patterns, moving constituents like noun phrases or auxiliary verbs. Examples and the patterns are shown in Table~\ref{tab:single_transformations_examples}.

\paragraph{Nested Transformations}
Nested transformations combine two compatible single-level transformations in sequence, creating complex derivations that test multi-step grammatical operations. By requiring sequential transformations, these examples provide challenging evaluations of syntactic competence and reveal how well models track structural dependencies across derivational steps. The patterns are the same as in Table~\ref{tab:single_transformations_examples} and examples are provided in Table~\ref{tab:nested_transformations_examples}.

\begin{figure*}[t]
    \centering
    \begin{subfigure}{.5\textwidth}
      \centering
      \includegraphics[width=\columnwidth]{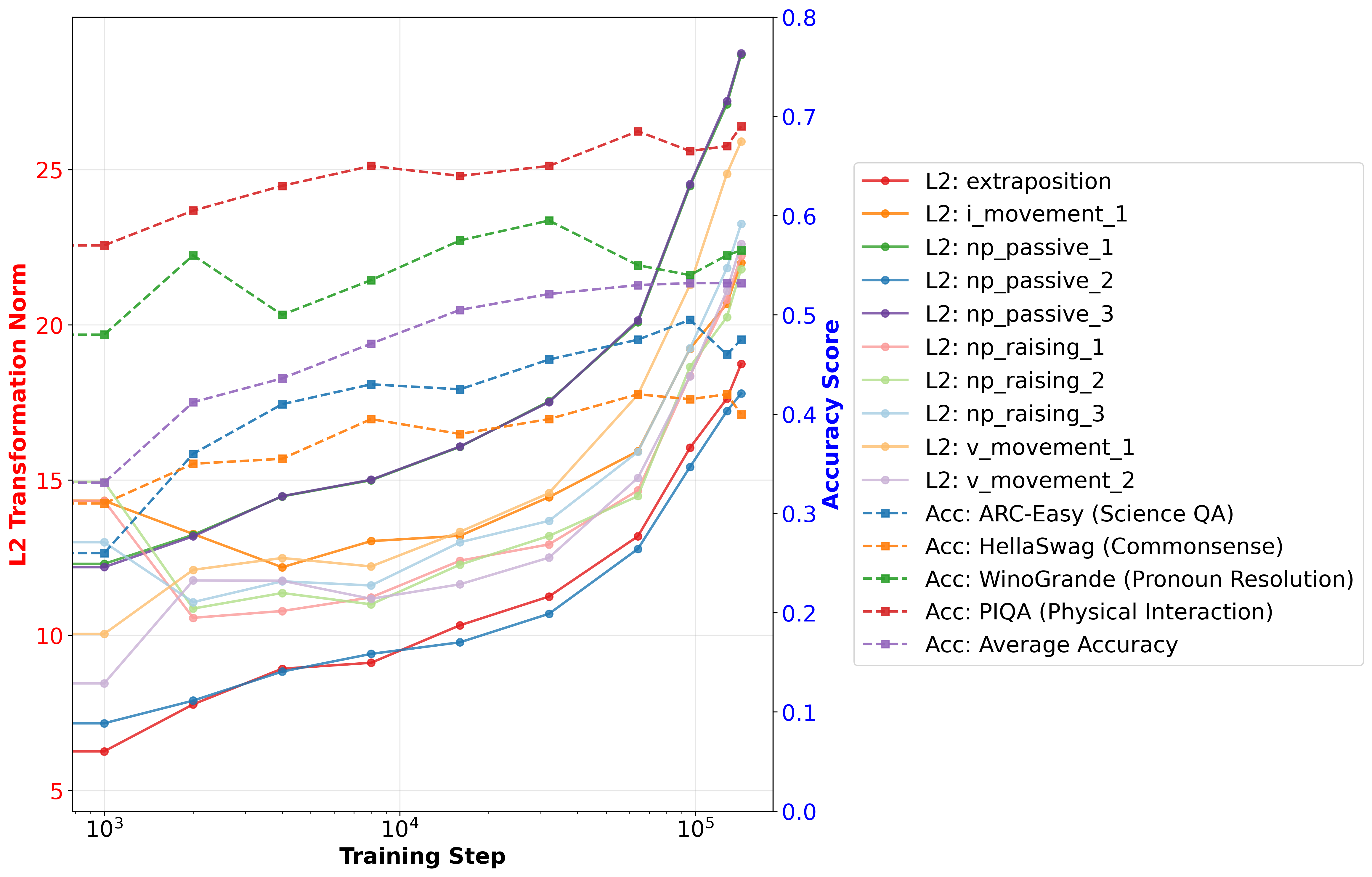}
    \end{subfigure}%
    \begin{subfigure}{.5\textwidth}
      \centering
      \includegraphics[width=\columnwidth]{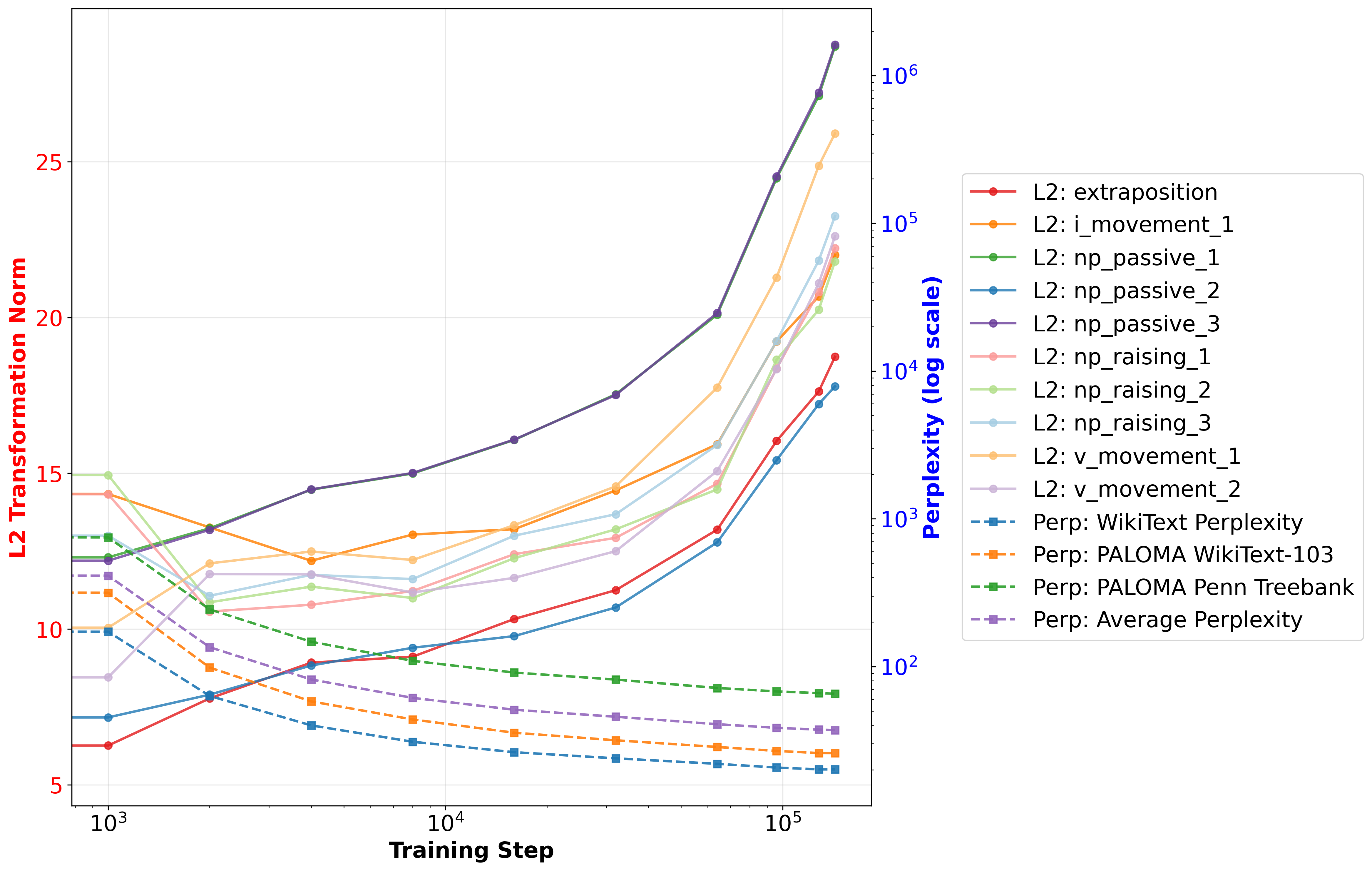}
    \end{subfigure}
    \caption{(Left) L2 norm progression compared to LM Harness Benchmarks. (Right) L2 norm progression compared to perplexity on WikiText-103 and Paloma Benchmarks}
    \label{fig:two-stacked}
\end{figure*}

\begin{table*}[t]
\centering
\caption{Dataset summary for single and nested transformations}
\begin{tabular}{|l|c|c|c|}
\hline
\textbf{Transformation} & \textbf{Count} & \textbf{Total Words} & \textbf{Total Tokens} \\
\hline
\multicolumn{4}{|c|}{\textbf{Single-level Transformations}} \\
\hline
Extraposition & 1999 & 28,497 & 35,118 \\
I Movement & 1917 & 23,077 & 28,148 \\
NP Passive 1 & 1785 & 24,016 & 29,908 \\
NP Passive 2 & 1996 & 26,154 & 32,356 \\
NP Passive 3 & 1421 & 19,222 & 23,967 \\
NP Raising 1 & 2000 & 32,536 & 40,683 \\
NP Raising 2 & 1982 & 31,784 & 39,762 \\
NP Raising 3 & 2000 & 41,090 & 51,906 \\
V Movement 1 & 2000 & 22,976 & 27,887 \\
V Movement 2 & 2000 & 26,055 & 31,888 \\
\hline
\textbf{Total Single} & \textbf{19,100} & \textbf{275,407} & \textbf{341,623} \\
\hline
\multicolumn{4}{|c|}{\textbf{Nested-level Transformations}} \\
\hline
NP Raising 1 + Extraposition & 466 & 16,752 & 21,135 \\
NP Raising 3 + I Movement & 472 & 11,952 & 14,946 \\
NP Passive 3 + I Movement & 467 & 13,851 & 17,622 \\
NP Passive 2 + I Movement & 495 & 10,500 & 13,327 \\
NP Raising 1 + NP Passive 1 & 488 & 13,282 & 16,701 \\
V Movement + I Movement & 498 & 9,387 & 11,422 \\
Extraposition + NP Passive 1 & 496 & 13,461 & 17,010 \\
Np Raising 3 + NP Passive 3 & 499 & 16,664 & 21,121 \\
\hline
\textbf{Total Nested} & \textbf{3,881} & \textbf{105,849} & \textbf{133,284} \\
\hline
\textbf{Overall} & \textbf{22,981} & \textbf{381,256} & \textbf{474,907} \\
\hline
\end{tabular}
\label{fig:dataset_summary}
\end{table*}

\begin{figure*}[t]
\centering
\begin{subfigure}[t]{0.18\textwidth}
    \centering
    \includegraphics[width=\textwidth]{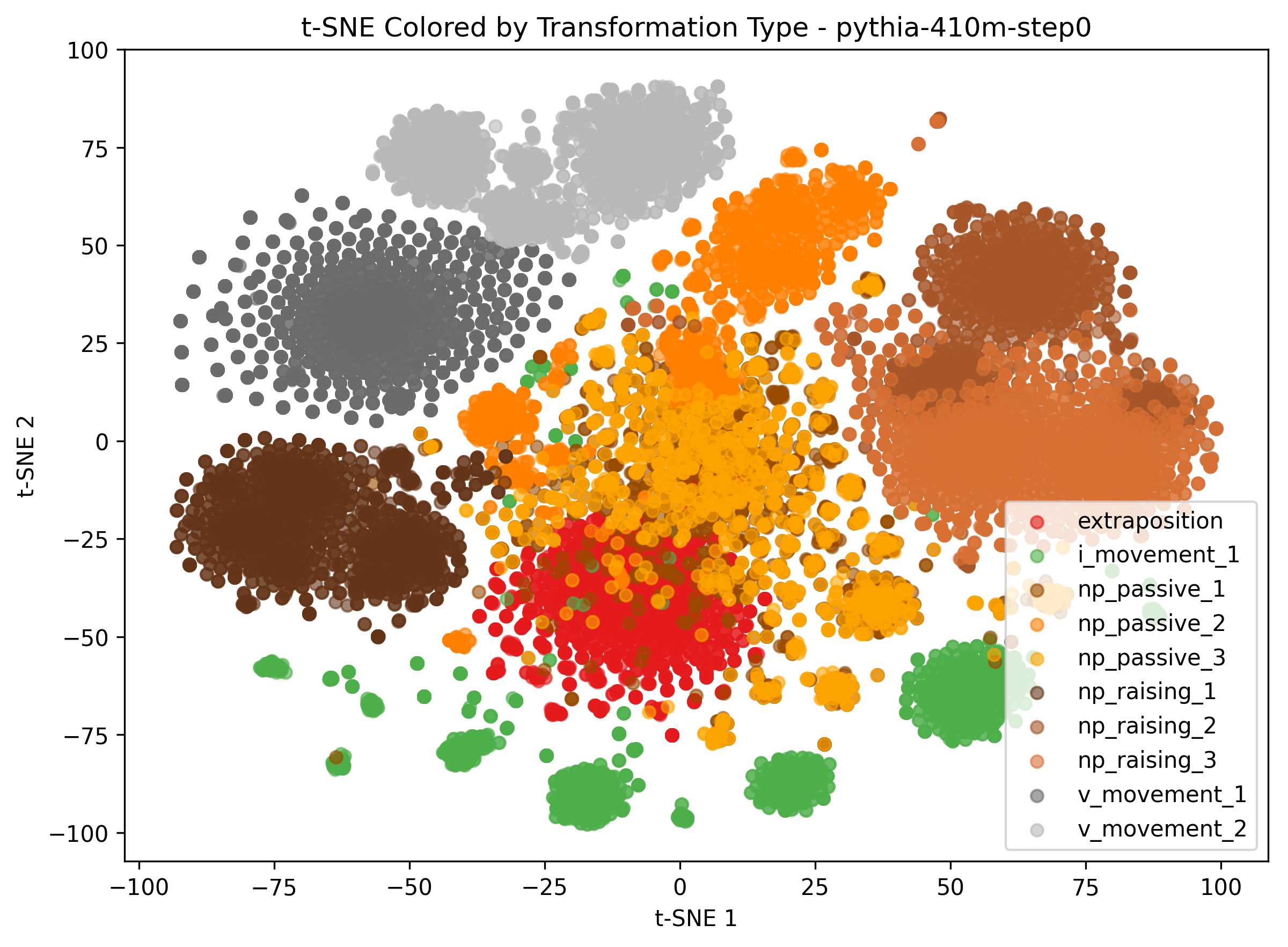}
    \caption{Step 0}
    \label{fig:tsne_step0}
\end{subfigure}
\hfill
\begin{subfigure}[t]{0.18\textwidth}
    \centering
    \includegraphics[width=\textwidth]{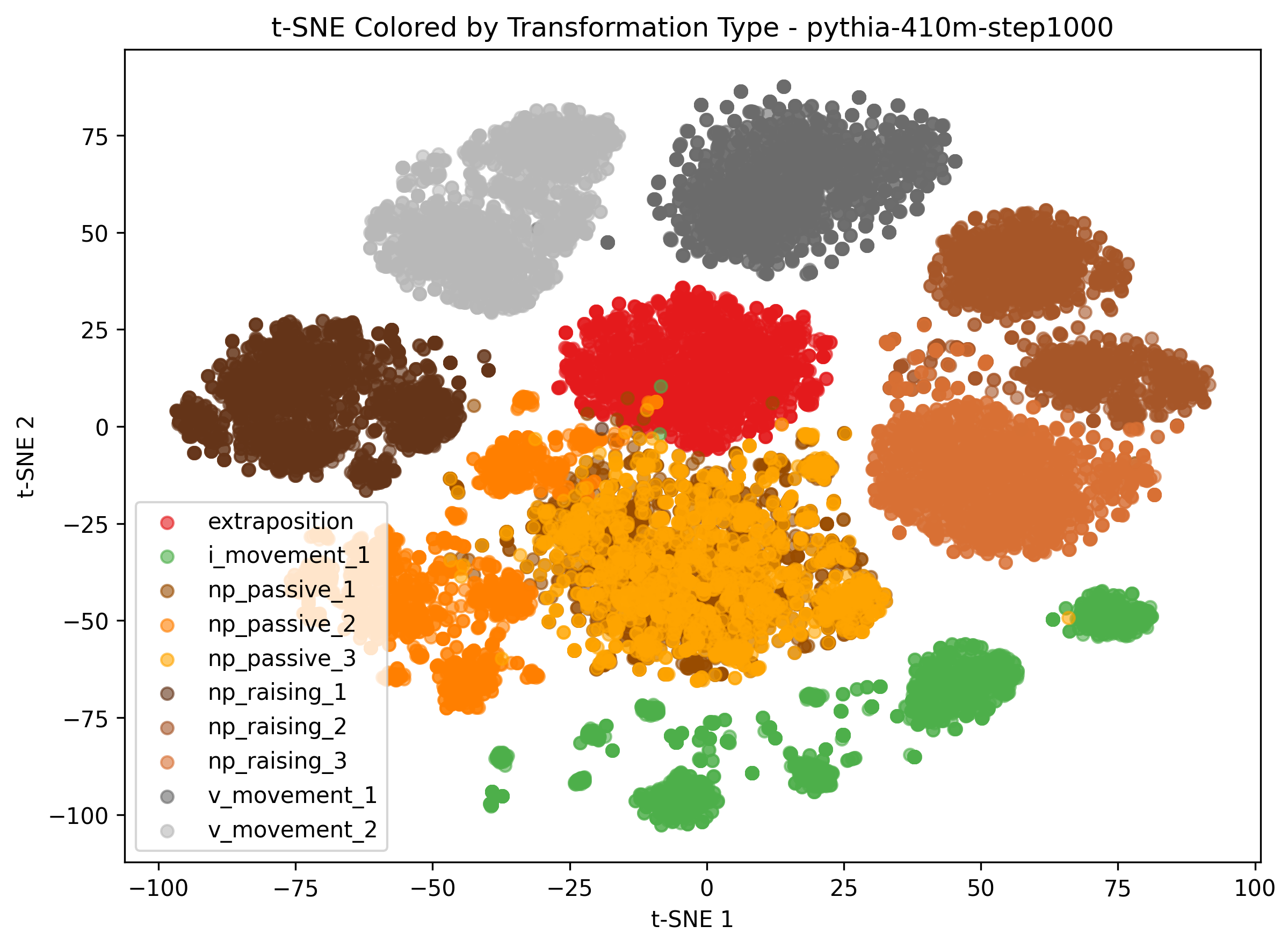}
    \caption{Step 1K}
    \label{fig:tsne_step1k}
\end{subfigure}
\hfill
\begin{subfigure}[t]{0.18\textwidth}
    \centering
    \includegraphics[width=\textwidth]{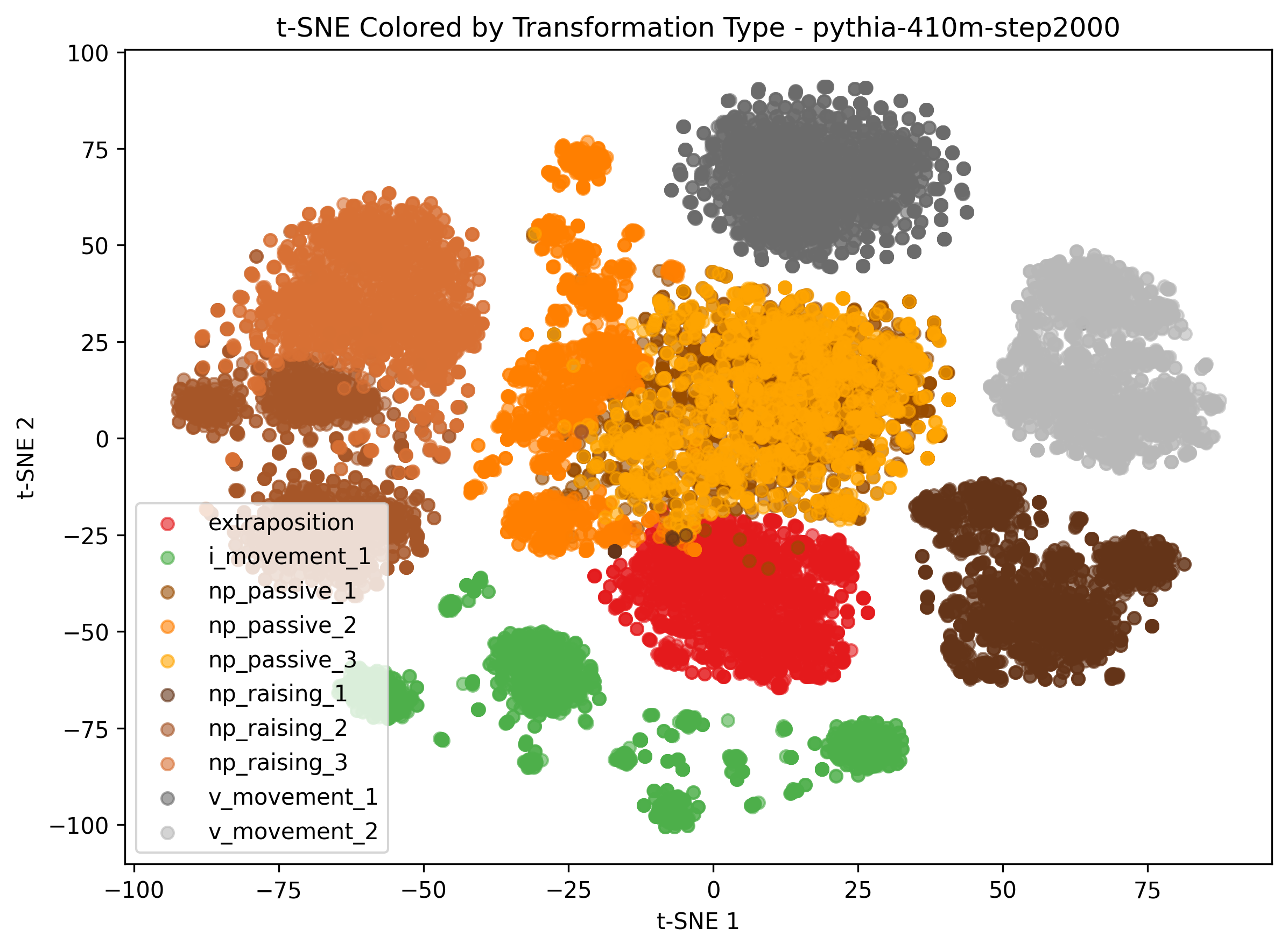}
    \caption{Step 2K}
    \label{fig:tsne_step2k}
\end{subfigure}
\hfill
\begin{subfigure}[t]{0.18\textwidth}
    \centering
    \includegraphics[width=\textwidth]{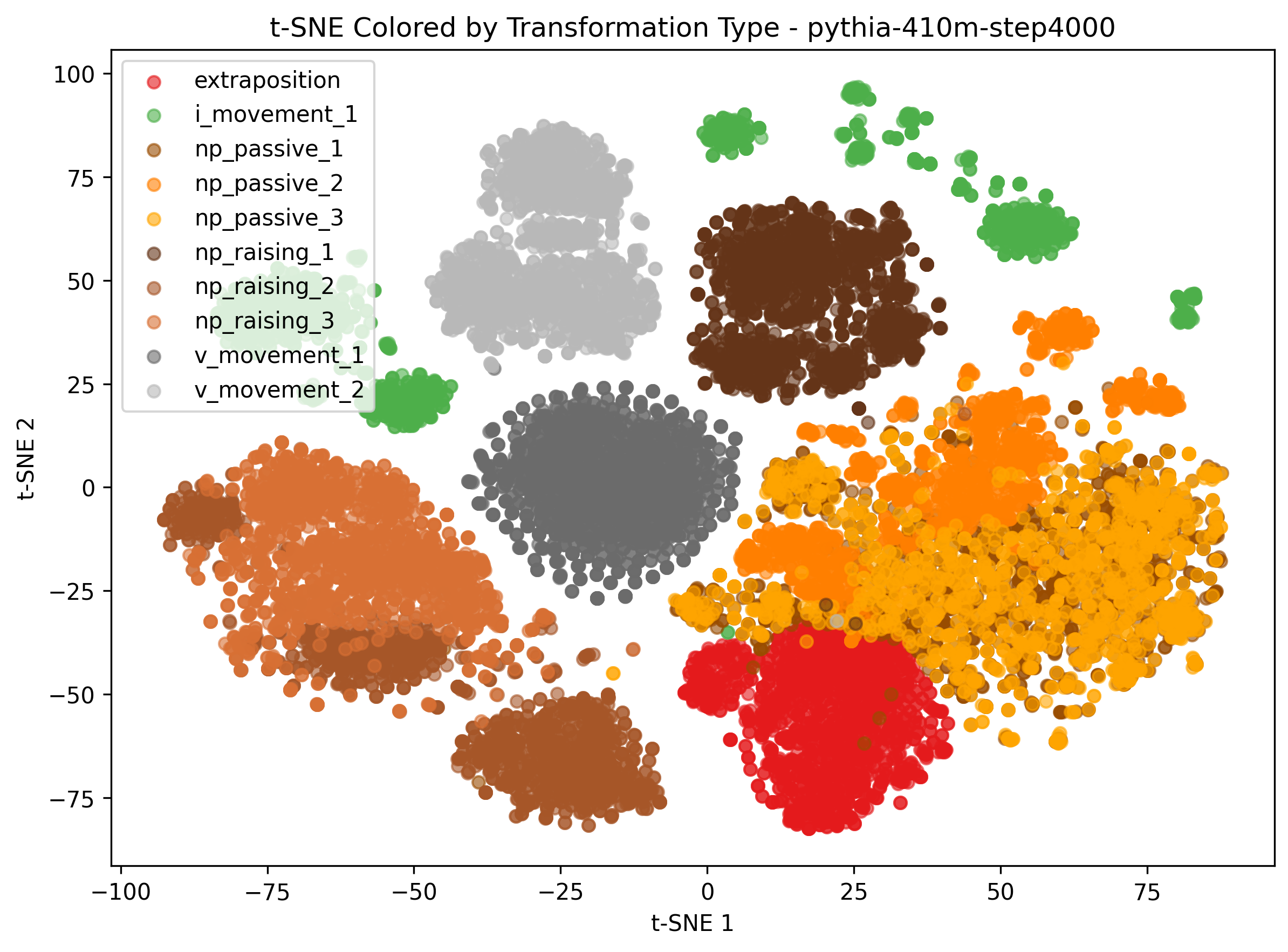}
    \caption{Step 4K}
    \label{fig:tsne_step4k}
\end{subfigure}
\hfill
\begin{subfigure}[t]{0.18\textwidth}
    \centering
    \includegraphics[width=\textwidth]{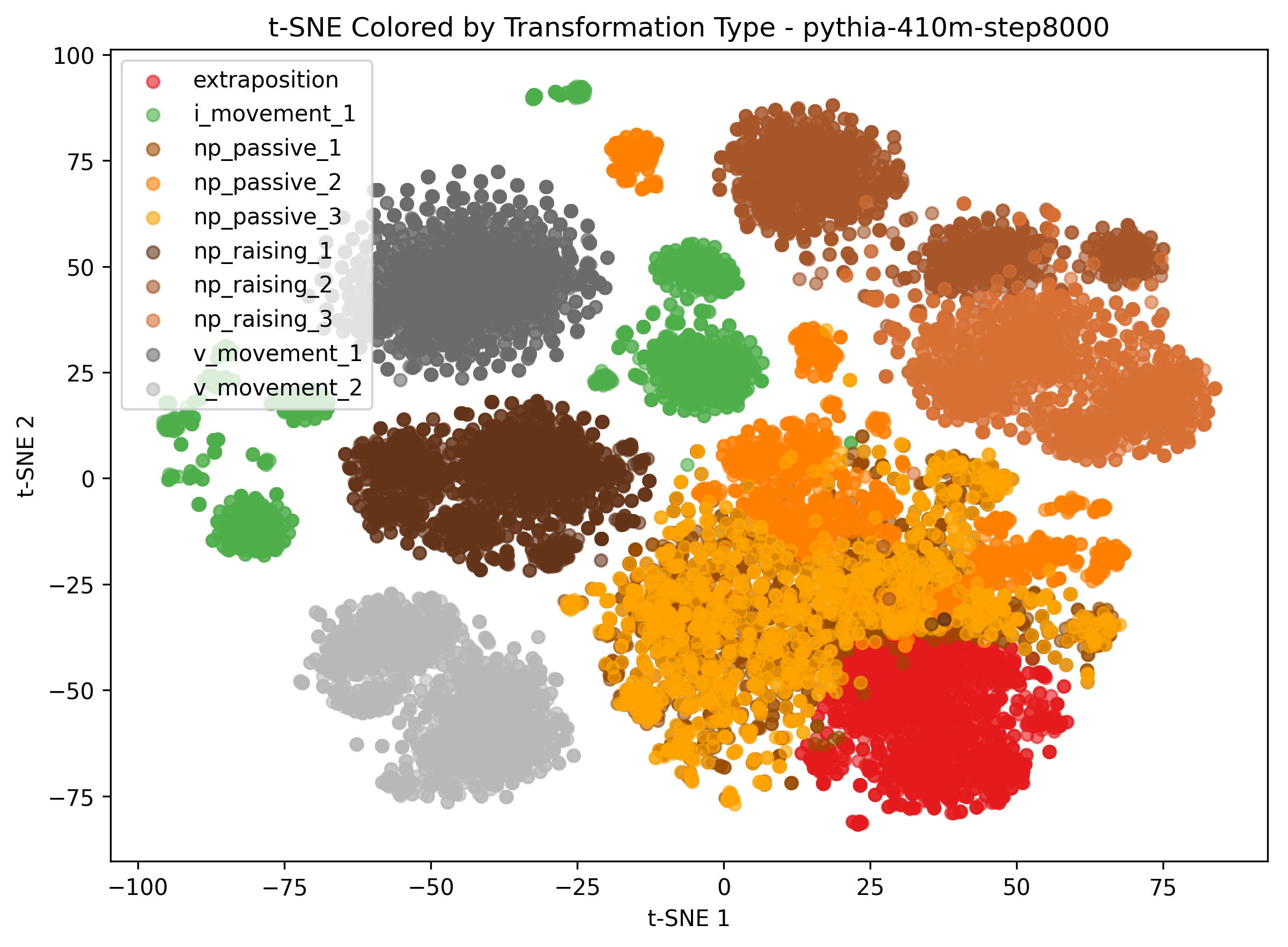}
    \caption{Step 8K}
    \label{fig:tsne_step8k}
\end{subfigure}

\vspace{0.5em}

\begin{subfigure}[t]{0.18\textwidth}
    \centering
    \includegraphics[width=\textwidth]{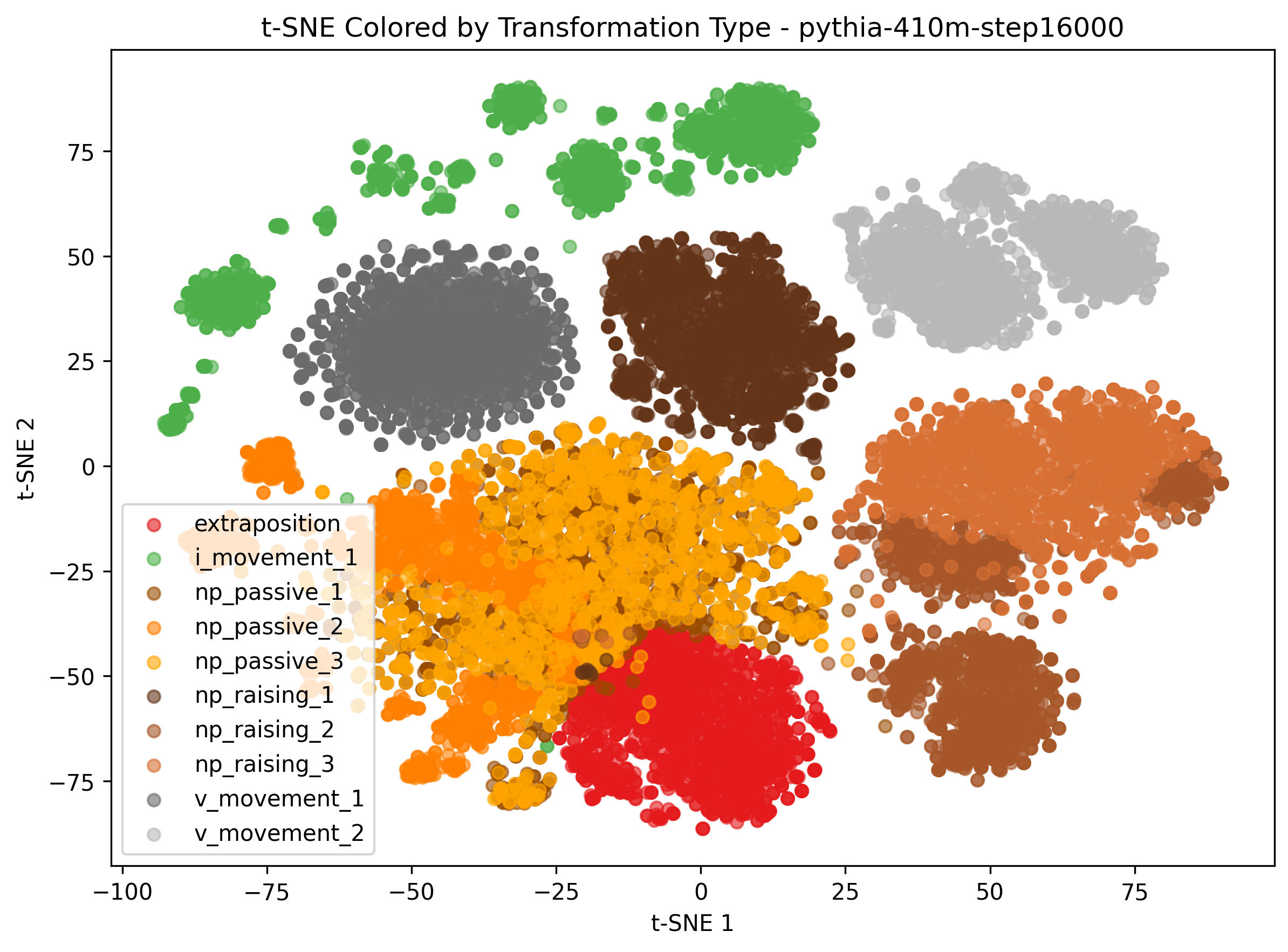}
    \caption{Step 16K}
    \label{fig:tsne_step16k}
\end{subfigure}
\hfill
\begin{subfigure}[t]{0.18\textwidth}
    \centering
    \includegraphics[width=\textwidth]{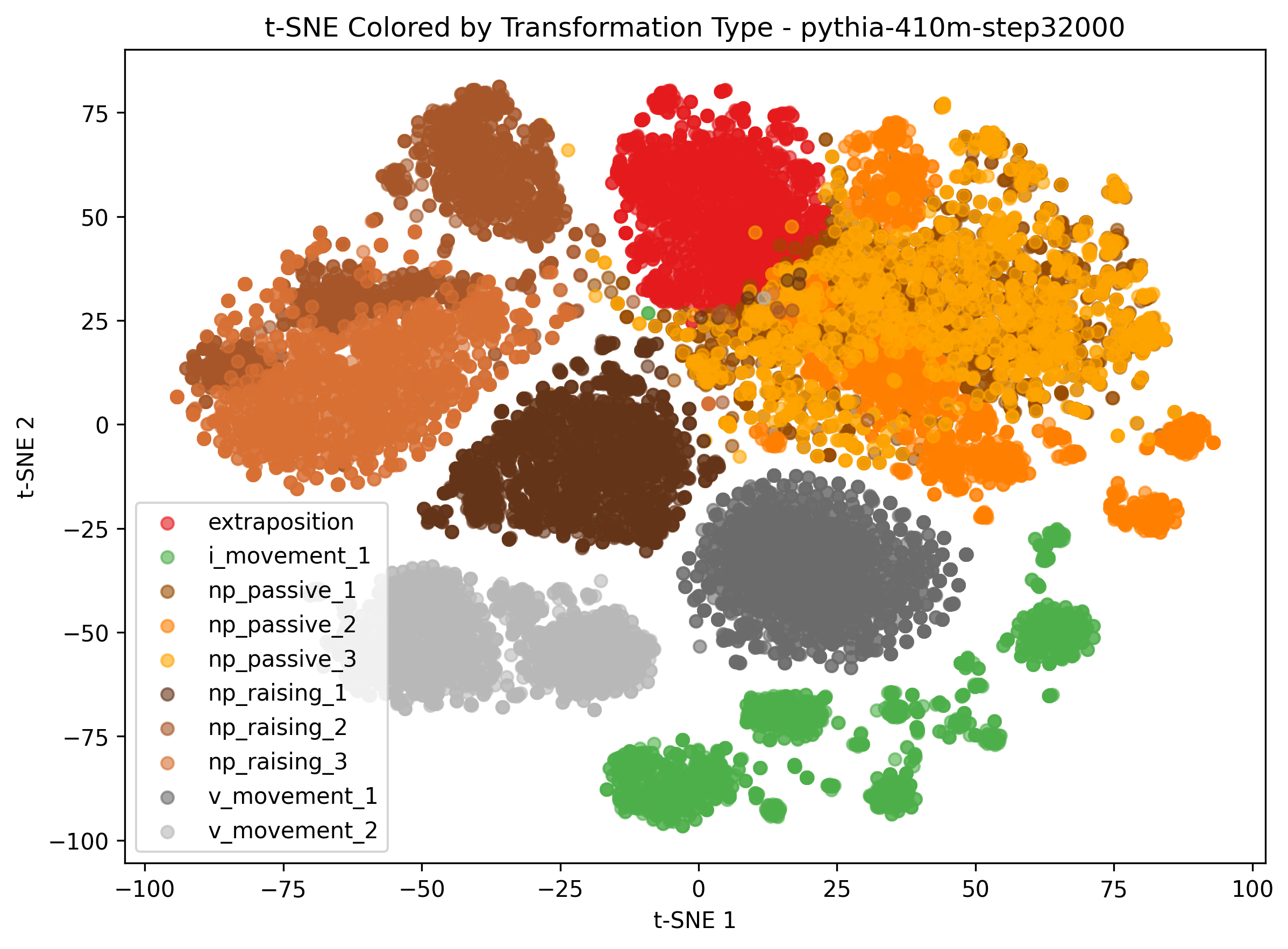}
    \caption{Step 32K}
    \label{fig:tsne_step32k}
\end{subfigure}
\hfill
\begin{subfigure}[t]{0.18\textwidth}
    \centering
    \includegraphics[width=\textwidth]{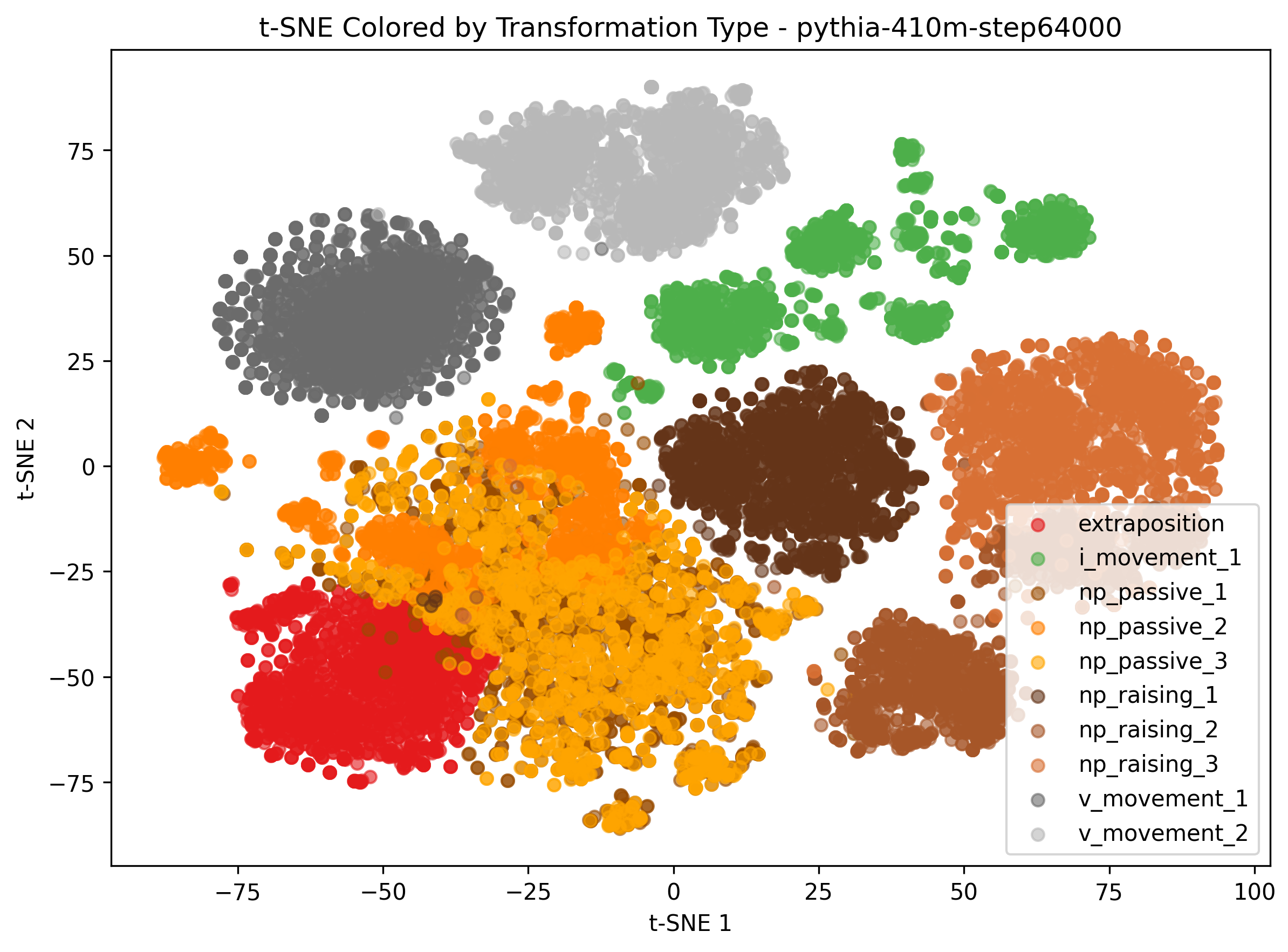}
    \caption{Step 64K}
    \label{fig:tsne_step64k}
\end{subfigure}
\hfill
\begin{subfigure}[t]{0.18\textwidth}
    \centering
    \includegraphics[width=\textwidth]{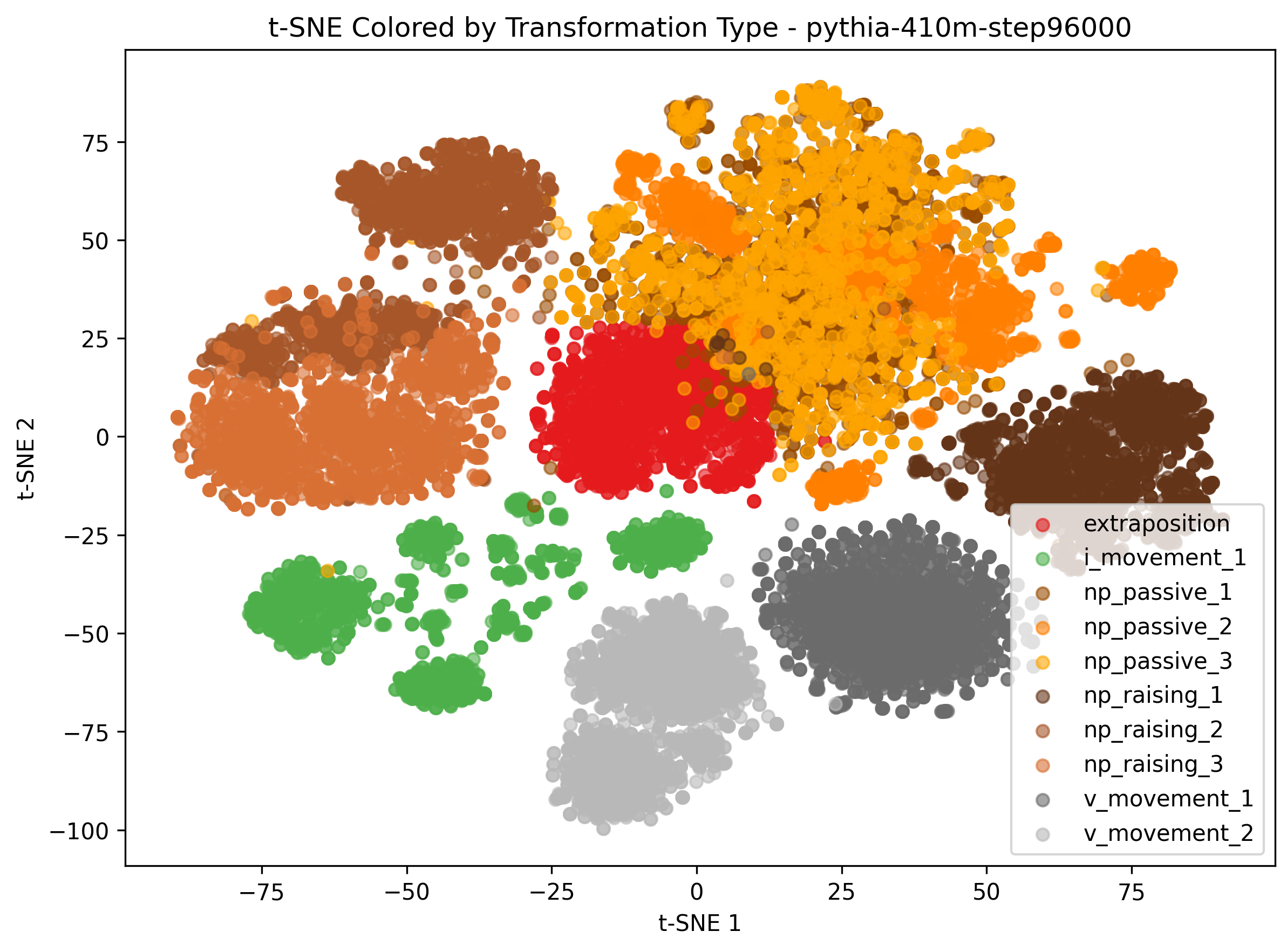}
    \caption{Step 96K}
    \label{fig:tsne_step96k}
\end{subfigure}
\hfill
\begin{subfigure}[t]{0.18\textwidth}
    \centering
    \includegraphics[width=\textwidth]{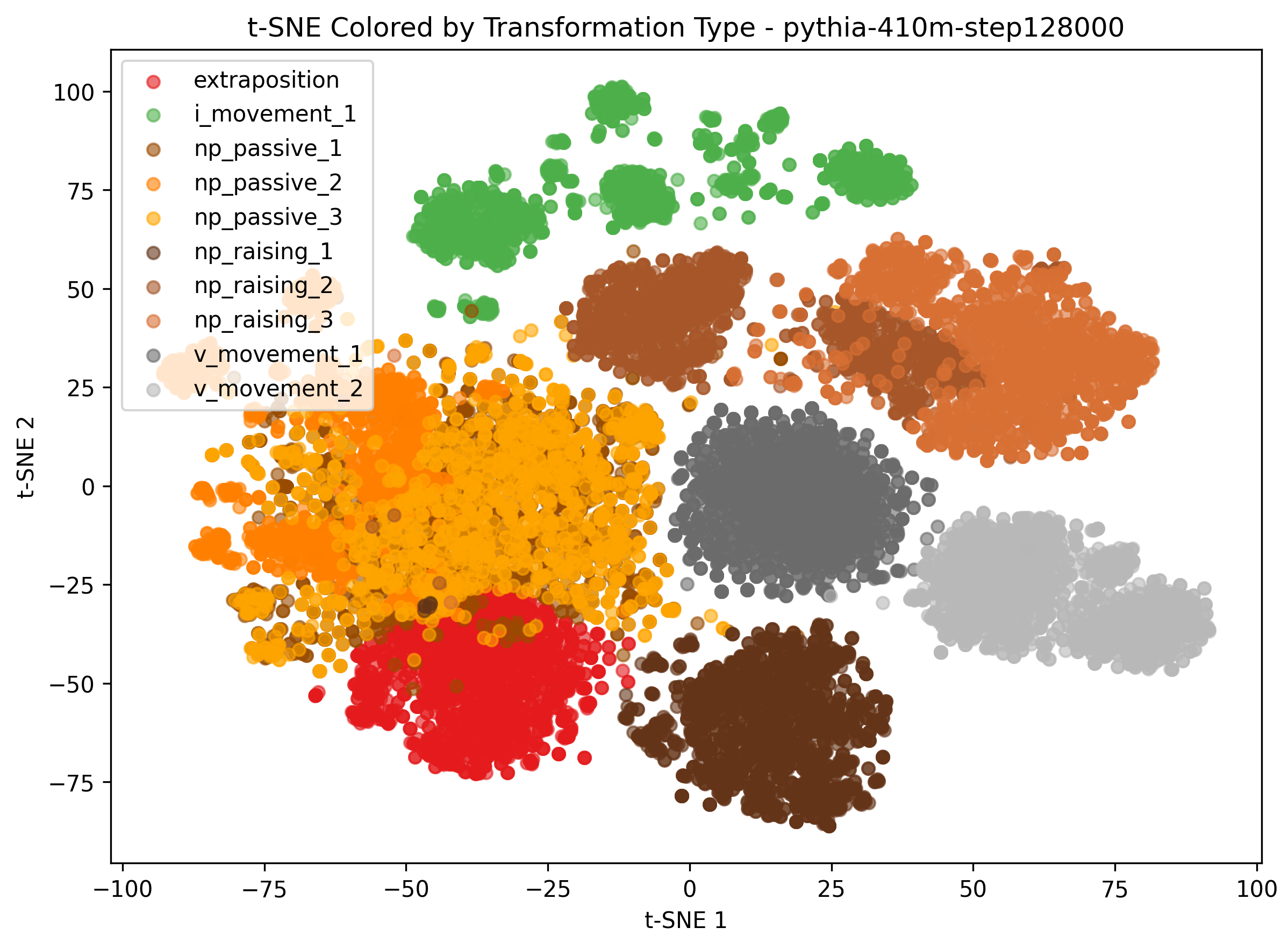}
    \caption{Step 128K}
    \label{fig:tsne_step128k}
\end{subfigure}

\vspace{0.5em}

\begin{subfigure}[t]{0.18\textwidth}
    \centering
    \includegraphics[width=\textwidth]{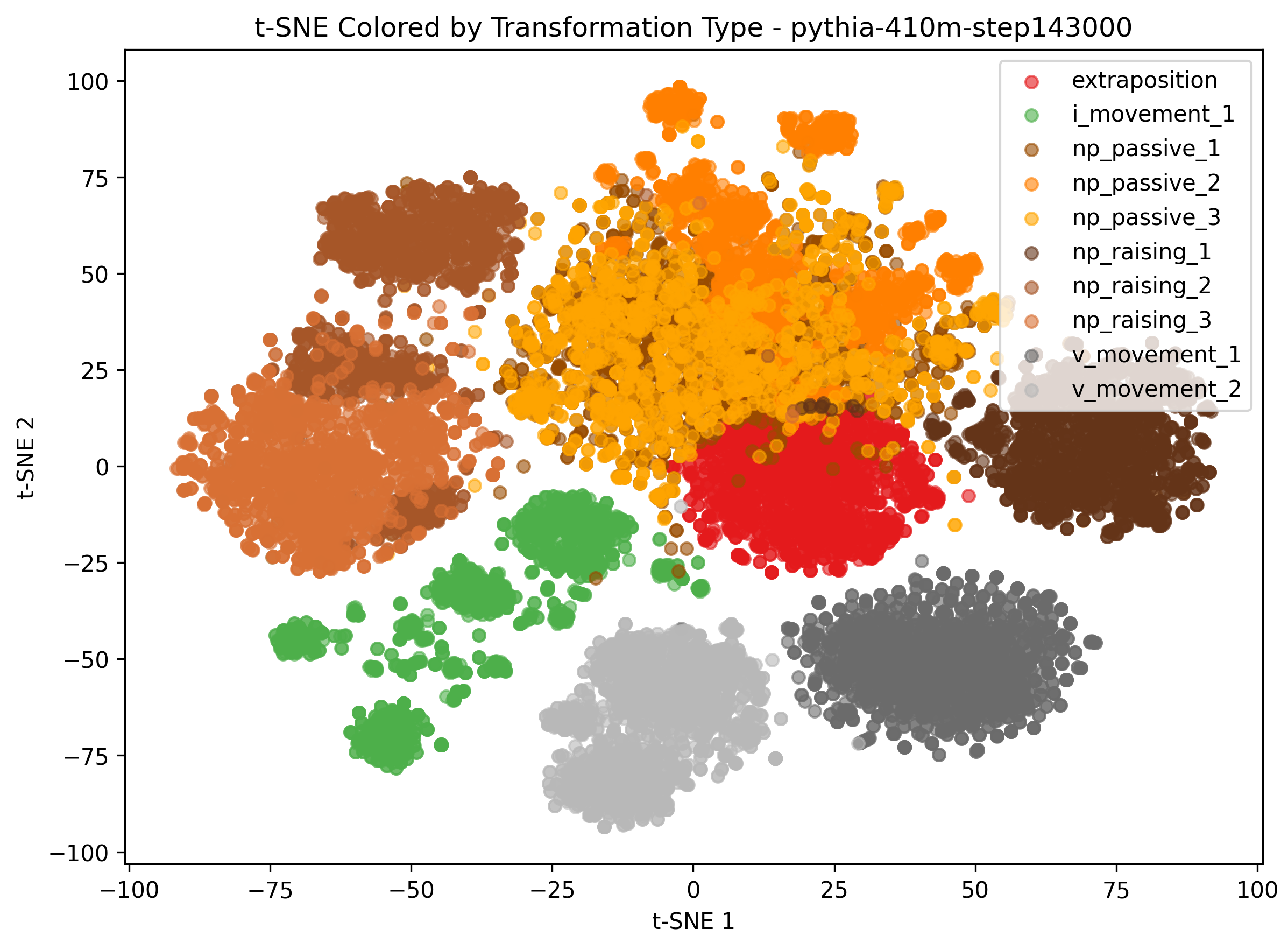}
    \caption{Step 143K}
    \label{fig:tsne_step143k}
\end{subfigure}

\caption{t-SNE visualization of syntactic transformation embedding differences across Pythia-410M training. Each subplot shows the 2D t-SNE projection of embedding difference vectors (transformed - original) for all transformation types at different training steps (0, 1K, 2K, 4K, 8K, 16K, 32K, 64K, 96K, 128K, 143K). Different colors represent different syntactic transformations.}
\label{fig:tsne_clustering_progression}
\end{figure*}

\begin{table*}[t]
\centering
\caption{Description and examples for single-level transformations}
\rowcolors{2}{gray!10}{white} 
\begin{tabularx}{\textwidth}{|l|X|X|X|}
\hline
\textbf{Transformation} & \textbf{Description} & \textbf{Pattern} & \textbf{Example} \\
\hline
\textbf{Extraposition} & Moves prepositional phrases from within noun phrases to sentence-final position & \texttt{[NP + PP][VP] → [NP][VP][PP]} & "The book on the table disappeared" → "The book disappeared on the table" \\
\textbf{I-Movement} & Moves auxiliaries/modals to sentence-initial position to form questions & \texttt{[NP][Aux/Modal][VP] → [Aux/Modal][NP][VP]?} & "She can swim" → "Can she swim?" \\
\textbf{NP Passive 1} & Standard active-to-passive with transitive verbs & \texttt{[NP subject][V][NP object] → [NP object][be + past participle][by NP subject]} & "The teacher graded the exams" → "The exams were graded by the teacher" \\
\textbf{NP Passive 2} & Small clause transformation with expanded structures & \texttt{[V][NP][small clause] → [V][that/to clause]} & "I consider him intelligent" → "I consider him to be intelligent" \\
\textbf{NP Passive 3} & Active-to-passive with clear subject-verb-object structures & \texttt{[NP subject][V][NP object] → [NP object][be + past participle][by NP subject]} & "The scientist discovered the formula" → "The formula was discovered by the scientist" \\
\textbf{NP Raising 1} & Expletive "it" structure to raised subject & \texttt{[It][verb][that[NP VP]] → [NP][verb][to VP]} & "It seems that John is happy" → "John seems to be happy" \\
\textbf{NP Raising 2} & Reverse raising: raised subject back to expletive structure & \texttt{[NP][verb][to VP] → [It][verb][that[NP VP]]} & "Mary seems to be tired" → "It seems that Mary is tired" \\
\textbf{NP Raising 3} & Complex raising with experiencer & \texttt{[NP\_1][verb][to NP\_2][to VP] → [It][verb][to NP\_2][that[NP\_1 VP]]} & "John seems to me to be honest" → "It seems to me that John is honest" \\
\textbf{V-Movement 1} & Integrates separated infinitive components & \texttt{[NP]; [VP infinitive] → [NP][VP finite]} & "The children; to play outside" → "The children play outside" \\
\textbf{V-Movement 2} & Integrates separated modal components & \texttt{[NP]; [modal]; [VP] → [NP][modal VP]} & "The students; can; solve the problem" → "The students can solve the problem" \\
\hline
\end{tabularx}
\label{tab:single_transformations_examples}
\end{table*}

\begin{figure*}[t]
\centering
\begin{subfigure}[t]{0.18\textwidth}
    \centering
    \includegraphics[width=\textwidth]{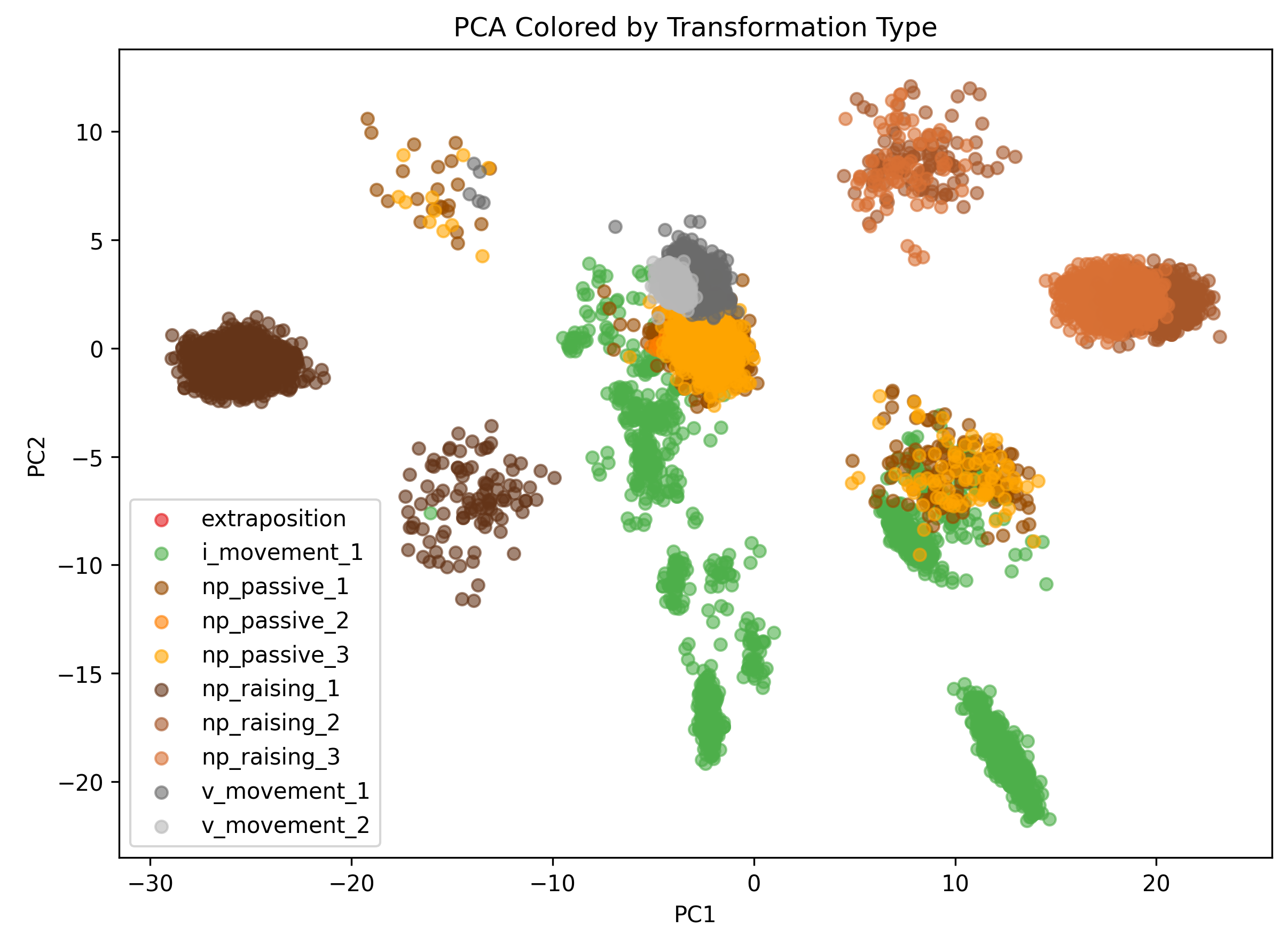}
    \caption{Step 0}
    \label{fig:clustering_step0}
\end{subfigure}
\hfill
\begin{subfigure}[t]{0.18\textwidth}
    \centering
    \includegraphics[width=\textwidth]{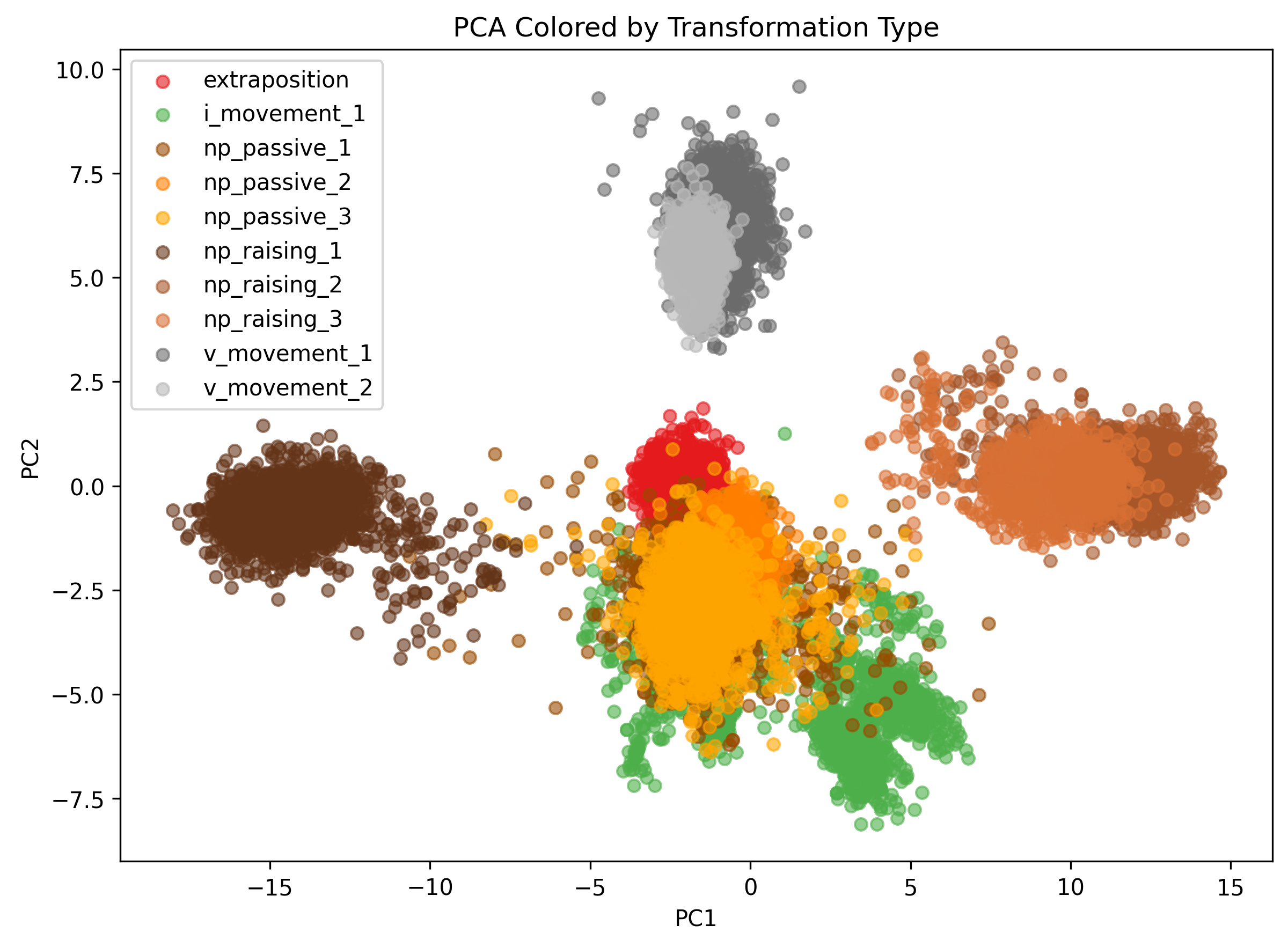}
    \caption{Step 1K}
    \label{fig:clustering_step1k}
\end{subfigure}
\hfill
\begin{subfigure}[t]{0.18\textwidth}
    \centering
    \includegraphics[width=\textwidth]{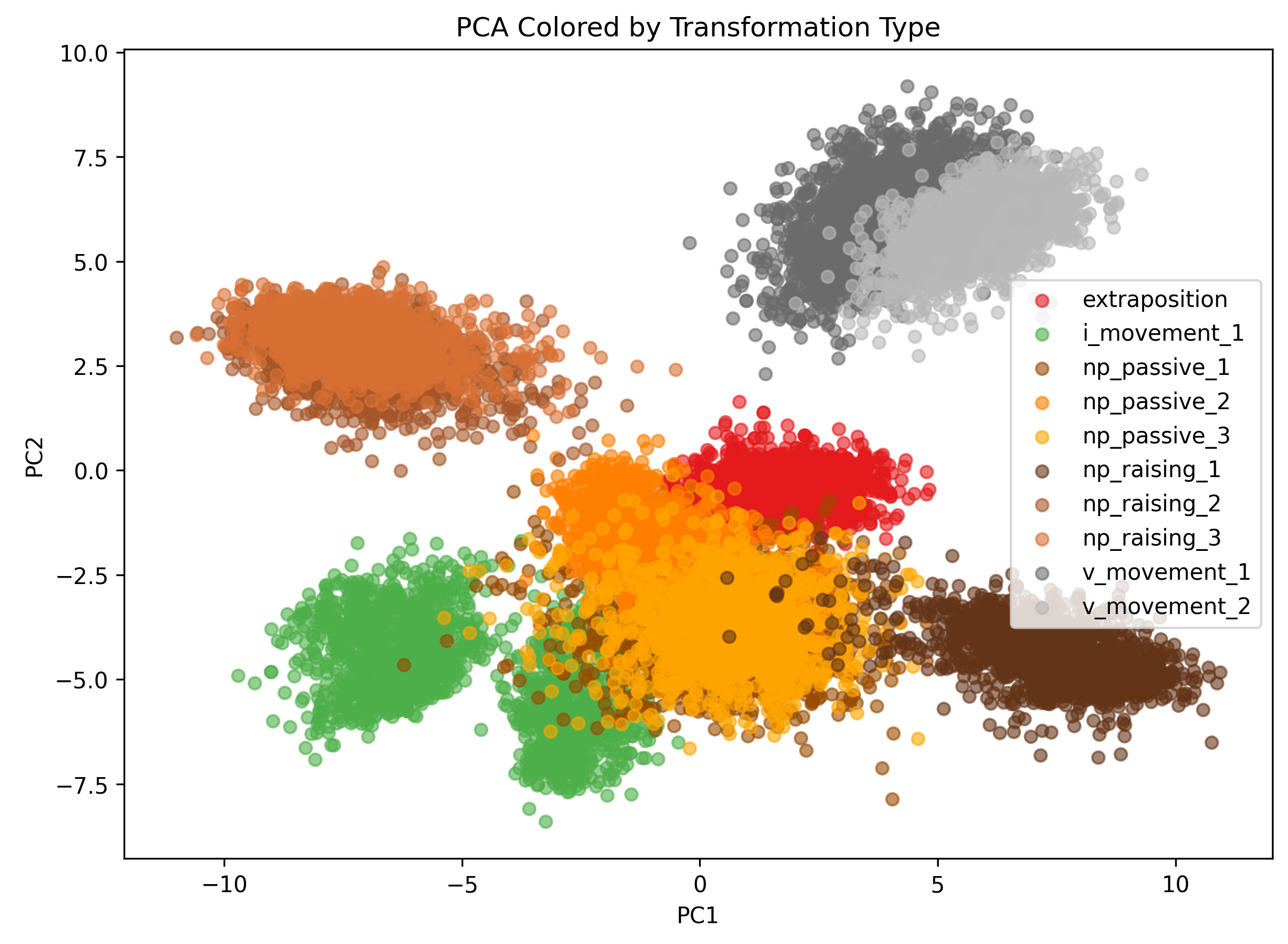}
    \caption{Step 2K}
    \label{fig:clustering_step2k}
\end{subfigure}
\hfill
\begin{subfigure}[t]{0.18\textwidth}
    \centering
    \includegraphics[width=\textwidth]{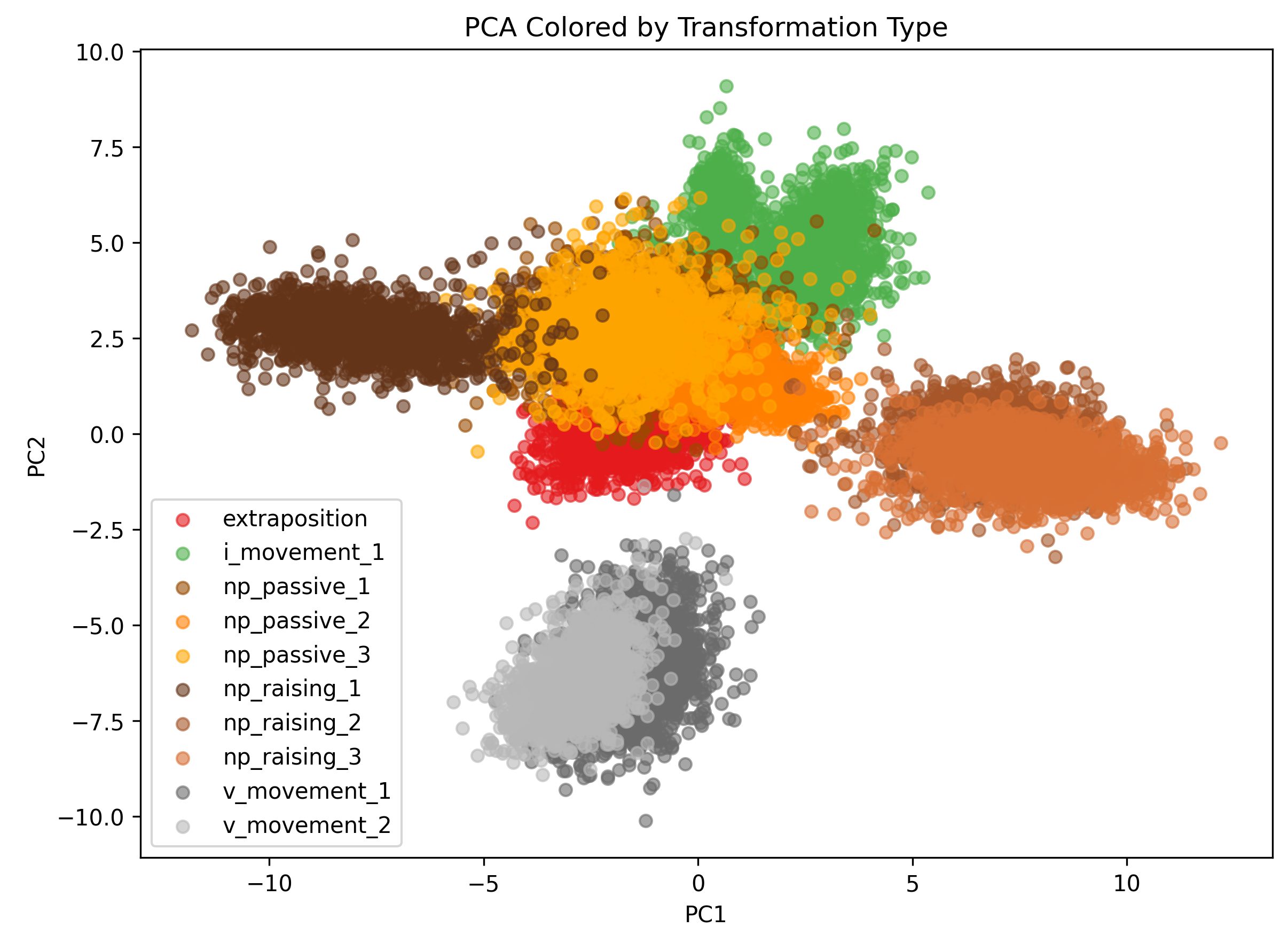}
    \caption{Step 4K}
    \label{fig:clustering_step4k}
\end{subfigure}
\hfill
\begin{subfigure}[t]{0.18\textwidth}
    \centering
    \includegraphics[width=\textwidth]{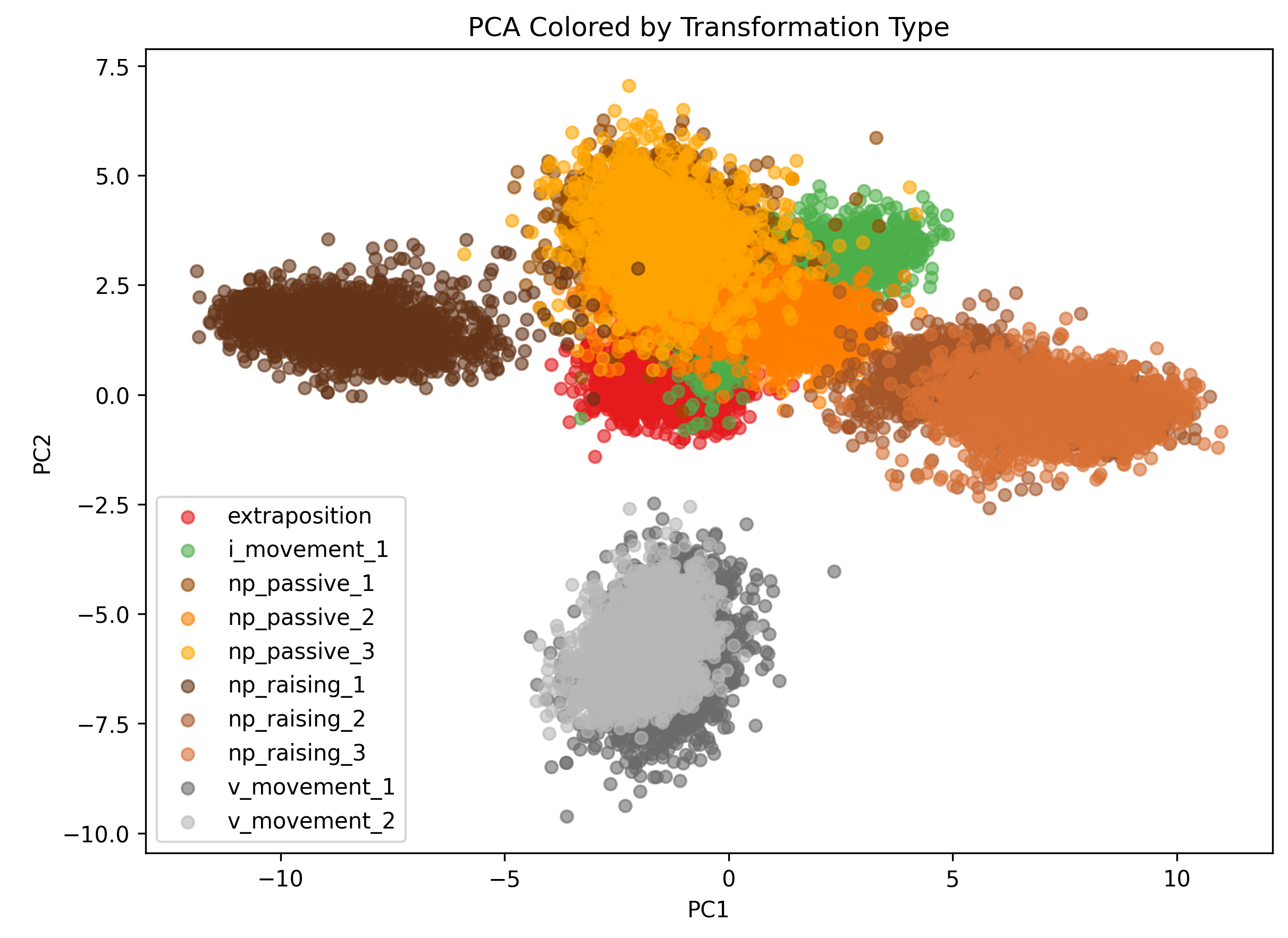}
    \caption{Step 8K}
    \label{fig:clustering_step8k}
\end{subfigure}

\vspace{0.5em}

\begin{subfigure}[t]{0.18\textwidth}
    \centering
    \includegraphics[width=\textwidth]{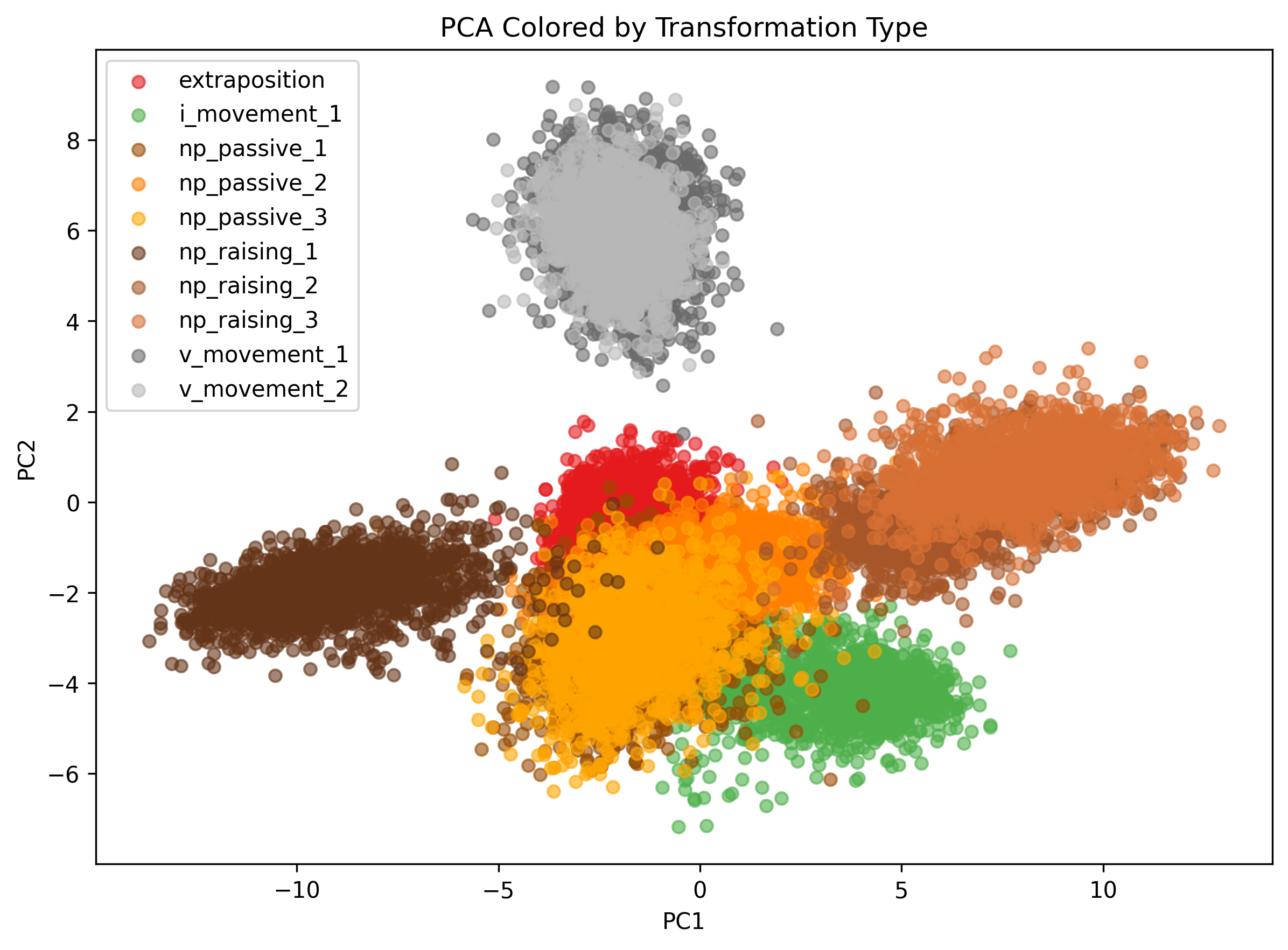}
    \caption{Step 16K}
    \label{fig:clustering_step16k}
\end{subfigure}
\hfill
\begin{subfigure}[t]{0.18\textwidth}
    \centering
    \includegraphics[width=\textwidth]{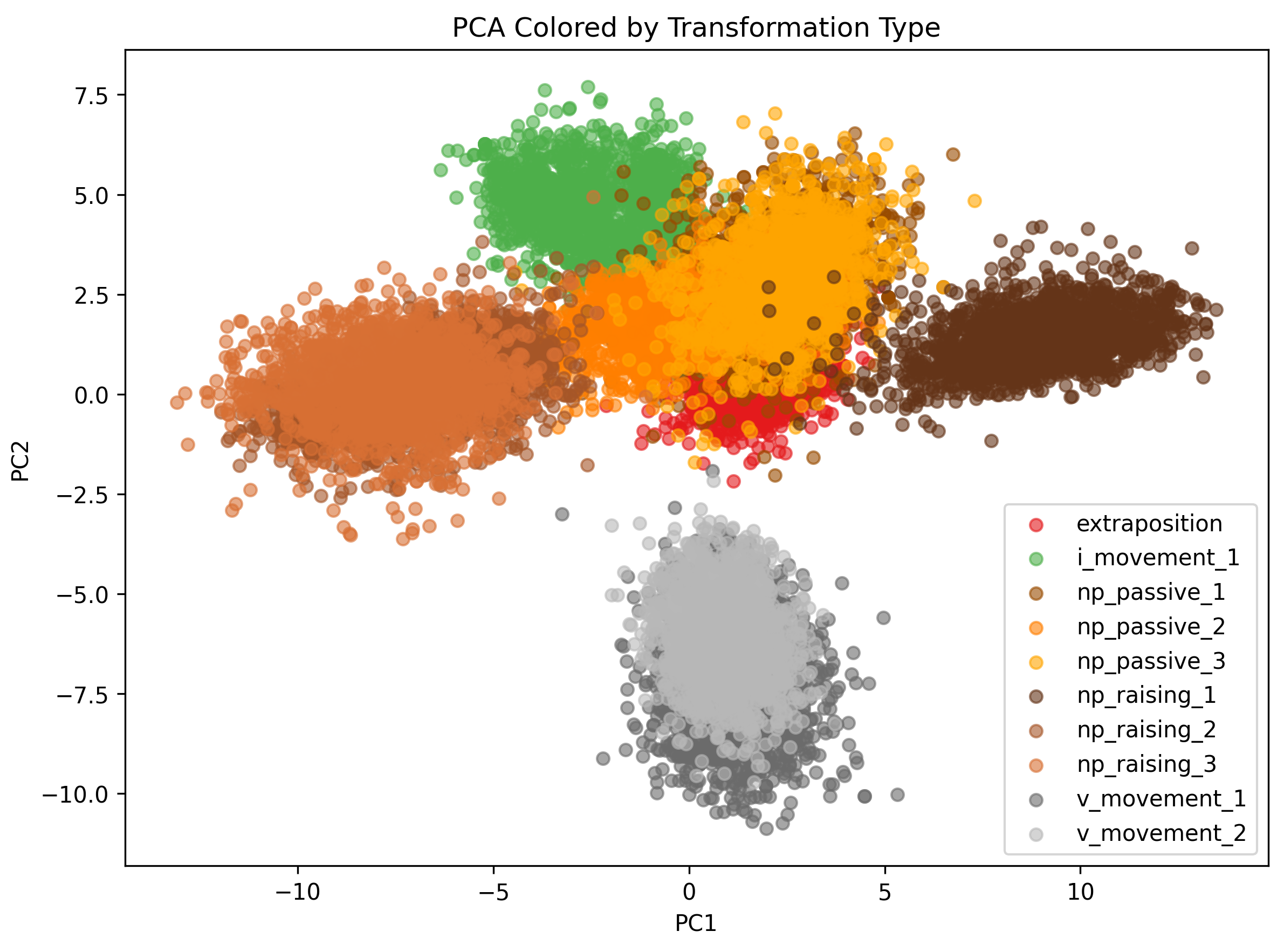}
    \caption{Step 32K}
    \label{fig:clustering_step32k}
\end{subfigure}
\hfill
\begin{subfigure}[t]{0.18\textwidth}
    \centering
    \includegraphics[width=\textwidth]{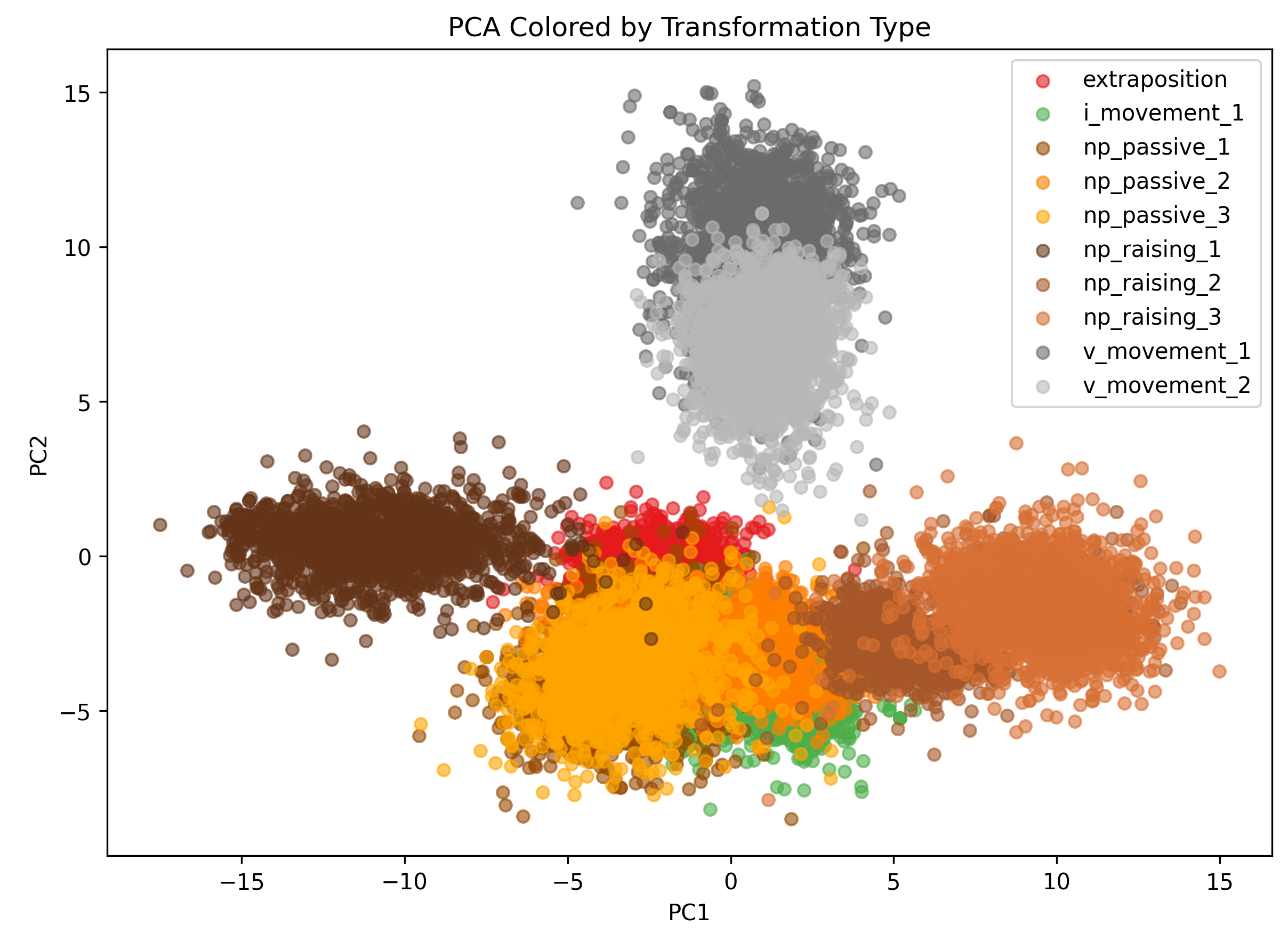}
    \caption{Step 64K}
    \label{fig:clustering_step64k}
\end{subfigure}
\hfill
\begin{subfigure}[t]{0.18\textwidth}
    \centering
    \includegraphics[width=\textwidth]{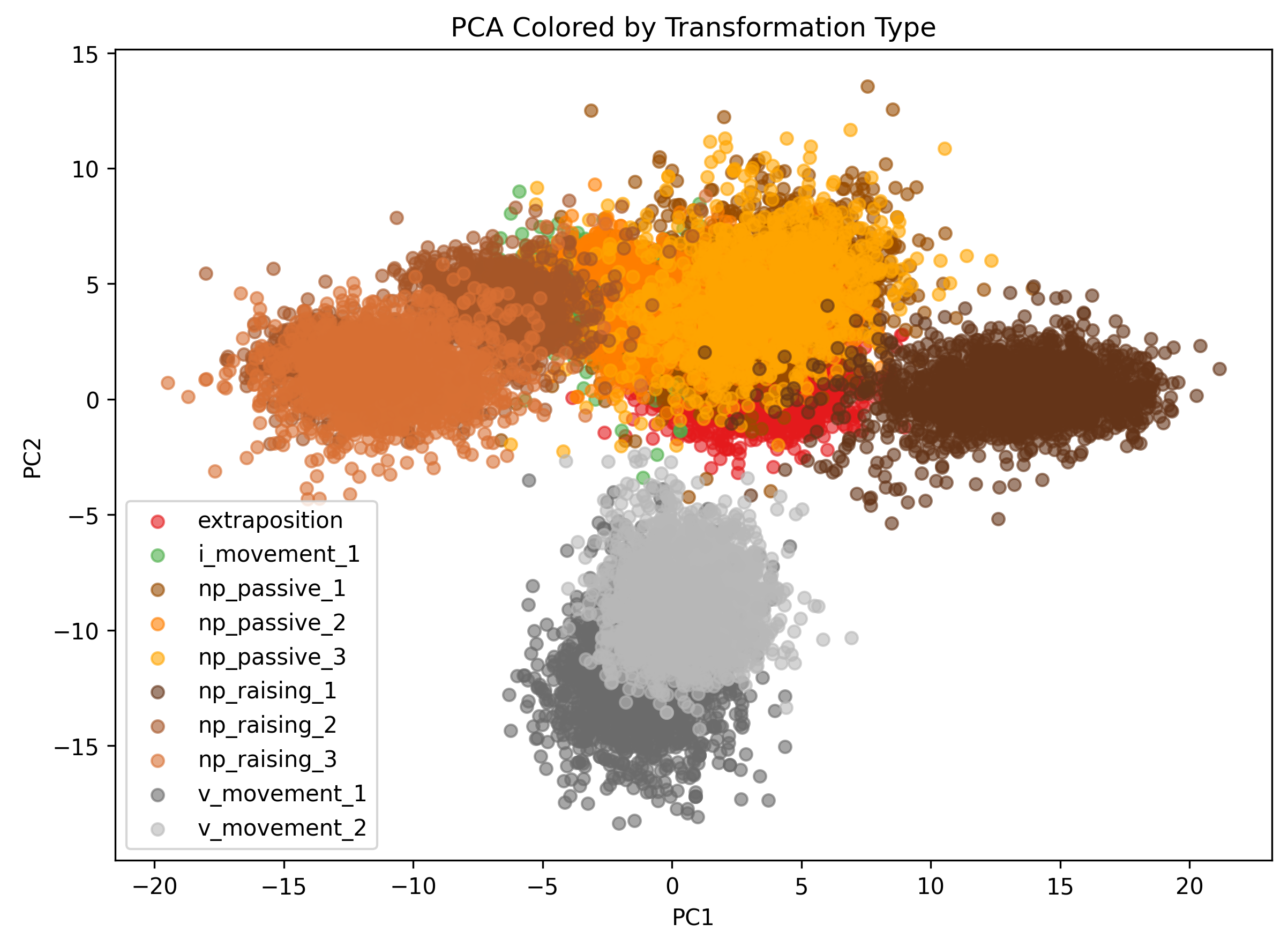}
    \caption{Step 96K}
    \label{fig:clustering_step96k}
\end{subfigure}
\hfill
\begin{subfigure}[t]{0.18\textwidth}
    \centering
    \includegraphics[width=\textwidth]{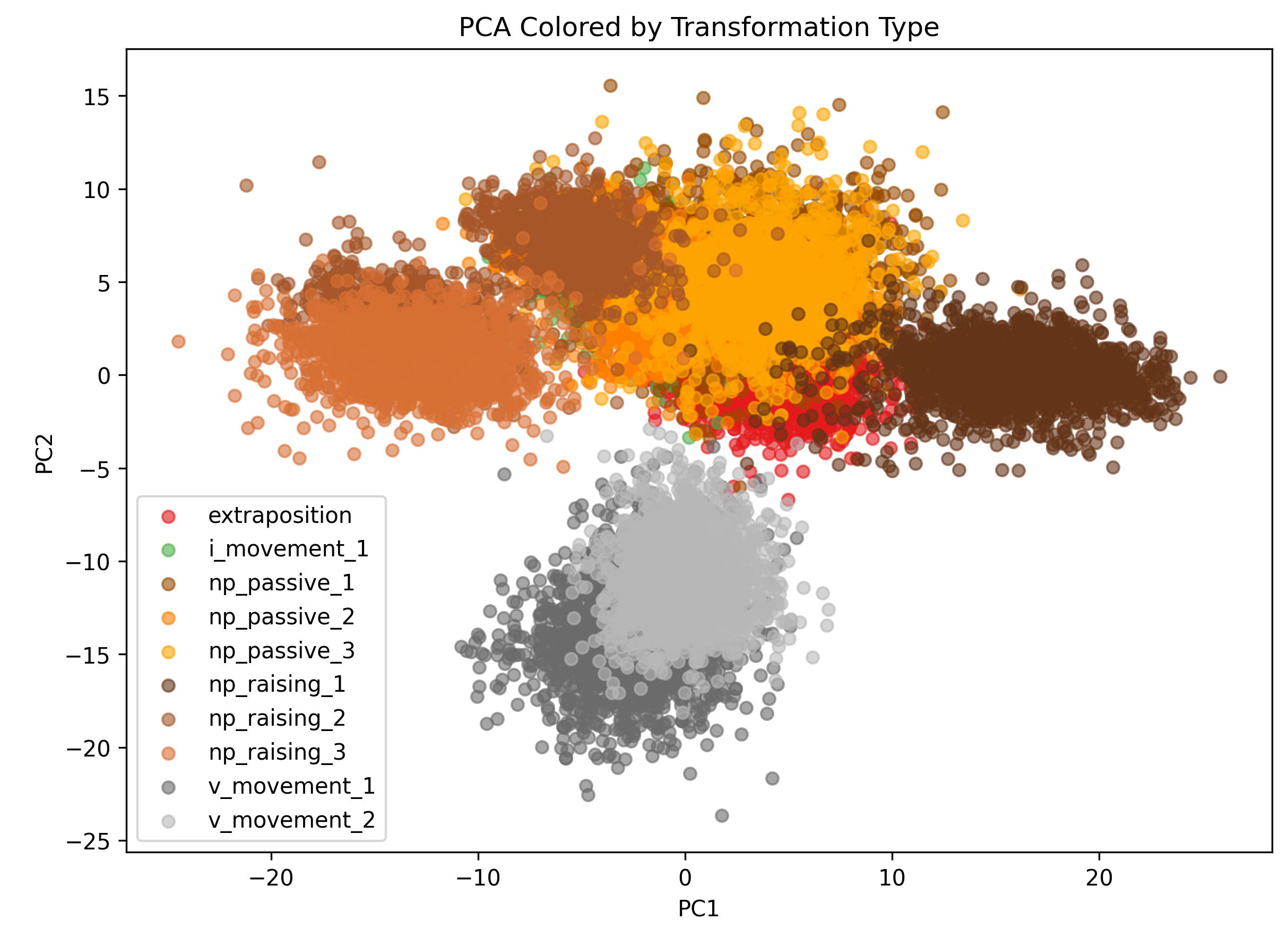}
    \caption{Step 128K}
    \label{fig:clustering_step128k}
\end{subfigure}

\vspace{0.5em}

\begin{subfigure}[t]{0.18\textwidth}
    \centering
    \includegraphics[width=\textwidth]{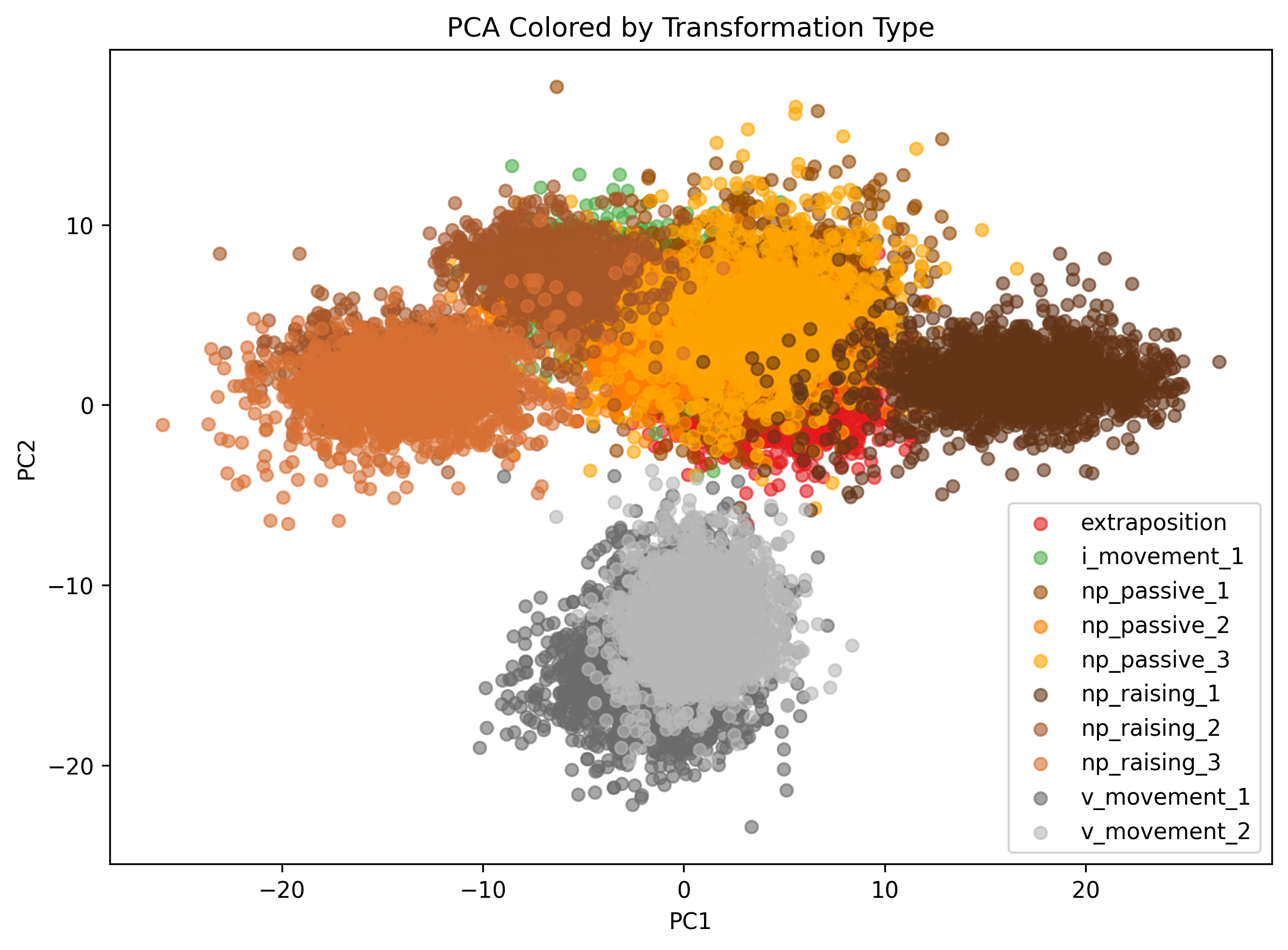}
    \caption{Step 143K}
    \label{fig:clustering_step143k}
\end{subfigure}

\caption{PCA clustering analysis of syntactic transformation embedding differences across Pythia-410M training. Each subplot shows the principal component analysis of embedding difference vectors for all transformation types at different training steps. Clustering patterns reveal how the model's internal representations of syntactic transformations evolve during training.}
\label{fig:pca_clustering_progression}
\end{figure*}

\begin{figure}[!htb]
    \centering
    \includegraphics[width=\columnwidth]{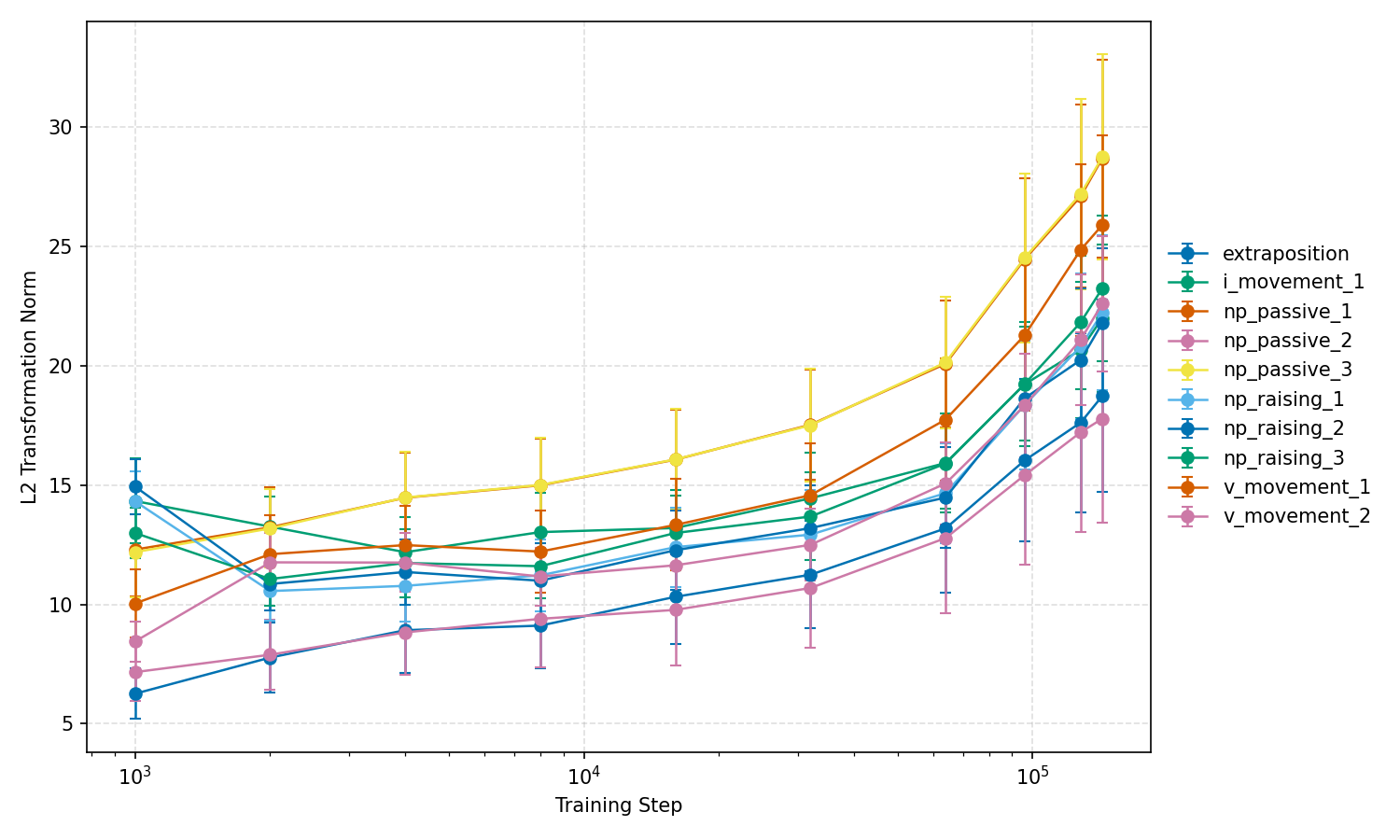}
    \vspace{0.3em} 
    \caption{L2 transformation norms for Pythia-410M training checkpoints.}
    \label{fig:pythia_l2_error_bars}
\end{figure}

\begin{table*}[!htb]
\centering
\caption{Description and examples for nested transformations. Nested transformations are combinations of two compatible single-level transformations}
\rowcolors{2}{gray!10}{white} 
\begin{tabularx}{\textwidth}{|l|X|}
\hline
\textbf{Transformation Sequence} & \textbf{Example} \\
\hline
\textbf{NP Passive 2 → I Movement} & "[empty] Put the corn on the table" → "The corn was put on the table" → "Was the corn put on the table?" \\
\textbf{NP Passive 3 → I Movement} & "The baker took the muffin away" → "The muffin was taken away by the baker" → "Was the muffin taken away by the baker?" \\
\textbf{NP Raising 1 → Extraposition} & "It seems that a review of my latest book appeared in the news" → "A review of my latest book appears to have appeared in the news" → "A review appears to have appeared in the news of my latest book" \\
\textbf{V Movement → I Movement} & "The children; to play outside" → "The children play outside" → "Do the children play outside?" \\
\textbf{Extraposition → NP Passive 1} & "The student from the university wrote the essay" → "The student wrote the essay from the university" → "The essay was written from the university by the student" \\
\textbf{NP Raising 3 → NP Passive 3} & "It seems that the chef placed the ingredients on the counter" → "The chef seems to place the ingredients on the counter" → "The ingredients seem to be placed on the counter by the chef" \\
\hline
\end{tabularx}
\label{tab:nested_transformations_examples}
\end{table*}

\end{document}